\documentclass{article}





\PassOptionsToPackage{numbers}{natbib}
\usepackage[preprint]{neurips_2020}

\usepackage[utf8]{inputenc} 
\usepackage[T1]{fontenc}    
\usepackage{hyperref}       
\usepackage{url}            
\usepackage{booktabs}       
\usepackage{amsfonts}       
\usepackage{nicefrac}       
\usepackage{microtype}      

\usepackage{amsfonts}
\usepackage{multicol}
\usepackage{amsmath}

\DeclareMathOperator*{\argmin}{arg\,min}
\DeclareMathOperator*{\E}{\mathbb{E}}


\newcommand{\x}{\mathbf{x}}
\newcommand{\z}{\mathbf{z}}
\newcommand{\e}{\mathbf{e}}
\newcommand{\X}{\mathbf{X}}
\newcommand{\Z}{\mathbf{Z}}
\newcommand{\G}{\mathbf{G}}
\newcommand{\bxi}{\bm{\xi}}
\newcommand{\W}{\mathbf{W}}

\newcommand{\pushsum}{Online Push-Sum (OPS)}

\usepackage{mathtools}
\usepackage{bm}
\usepackage{caption}
\usepackage{subcaption}
\usepackage{graphicx}
\usepackage{algorithm,algorithmicx,algpseudocode}
\usepackage{hyperref}
\usepackage{subcaption}

\usepackage{amsthm}
\theoremstyle{plain}
\newtheorem{theorem}{Theorem}
\newtheorem{lemma}[theorem]{Lemma}

\newtheorem{assumption}[theorem]{Assumption}

\usepackage{color}

\newcommand\numberthis{\addtocounter{equation}{1}\tag{\theequation}}

\title{Central Server Free Federated Learning over\\ Single-sided Trust Social Networks}

%

\author{%
  Chaoyang He \thanks{Equal Contribution} \\
  University of Southern California\\
  \texttt{chaoyang.he@usc.edu} \\
  \And
  Conghui Tan \footnotemark[1] \\
  WeBank \\
  \texttt{tanconghui@gmail.com} \\
  \And
  Hanlin Tang \\
  University of Rochester \\
  \texttt{htang14@ur.rochester.edu} \\
  \And
  Shuang Qiu \\
  University of Michigan \\
  \texttt{qiush@umich.edu} \\
  \And
  Ji Liu\\
  Kwai Inc. \\
  \texttt{ji.liu.uwisc@gmail.com}
}

\begin{document}

\maketitle

\begin{abstract}
  Federated learning has become increasingly important for modern machine learning, especially for data privacy-sensitive scenarios. Existing federated learning primarily adopts the central server-based network topology. However, in many social network scenarios, centralized federated learning is not applicable. For instance, a central agent or server connecting all users may not exist, or the communication cost to the central server is not affordable. In this paper, we consider a generic setting: 1) the central server may not exist, and 2) the social network is unidirectional or of single-sided trust (i.e., user A trusts user B but user B may not trust user A). We propose a central server free federated learning algorithm, named Online Push-Sum (OPS) method, to handle this challenging but generic scenario. A rigorous regret analysis is also provided, which shows interesting results on how users can benefit from communication with trusted users in the federated learning scenario. This work builds upon the fundamental algorithm framework and theoretical guarantees for federated learning in the generic social network scenario.
\end{abstract}

\section{Introduction}
Federated learning has been well recognized as a framework able to protect data privacy \cite{konevcny2016federated,smith2017federated,yang2019federated}. State-of-the-art federated learning adopts the centralized network architecture where a centralized node collects the gradients sent from child agents to update the global model. Despite its simplicity, the centralized method suffers from communication and computational bottlenecks in the central node, especially for federated learning, where a large number of clients are usually involved. Moreover, to prevent reverse engineering of the user's identity, a certain amount of noise must be added to the gradient to protect user privacy, which partially sacrifices the efficiency and the accuracy \cite{shokri2015privacy}. 

To further protect the data privacy and avoid the communication bottleneck, the decentralized architecture has been recently proposed \cite{vanhaesebrouck2017decentralized,bellet2018personalized}, where the centralized node has been removed, and each node only communicates with its neighbors (with mutual trust) by exchanging their local models. Exchanging local models is usually favored to the data privacy protection over sending private gradients because the local model is the aggregation or mixture of a large sum of data while the local gradient directly reflects only one or a batch of private data samples. Although advantages of decentralized architecture have been well recognized over the state-of-the-art method (its centralized counterpart), it usually can only be run on the network with \emph{mutual trusts} (``trust'' means ``would like to send information to''). That is, two nodes (or users) can exchange their local models only if they trust each other reciprocally (e.g., node A may trust node B, but if node B does not trust node A, they cannot communicate). Given a social
network, one can only use the edges with mutual trust to run decentralized federated learning algorithms. Two immediate drawbacks are: (1) If all mutual trust edges do not form a connected network, the federated learning does not apply; (2) Removing all single-sided edges from the communication network could significantly reduce the efficiency of communication. 
These drawbacks lead to the question: \emph{How do we effectively utilize the single-sided trust edges under a decentralized federated learning framework?}

\begin{figure*}[t]
\centering

\begin{subfigure}{.32\textwidth}
\centering
\includegraphics[width=0.7\textwidth]{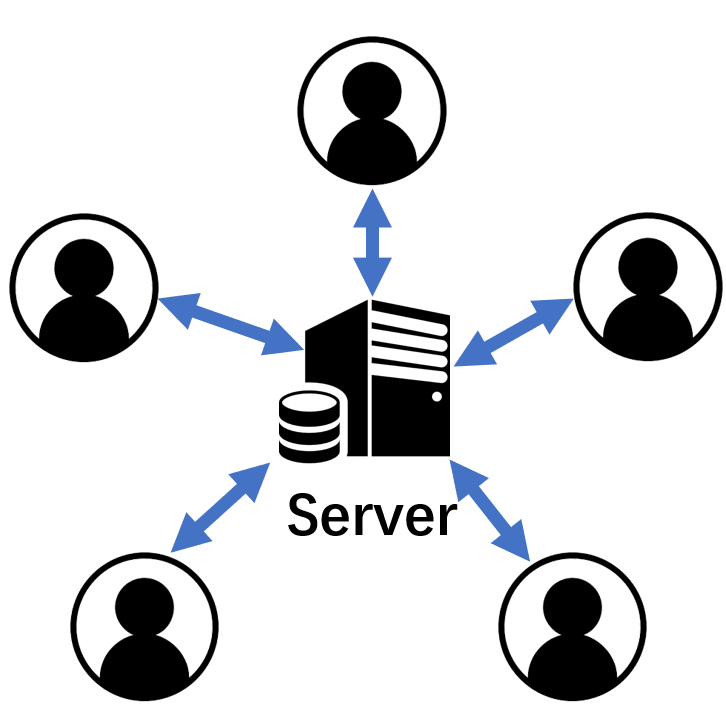}
\caption{Centralized}
\end{subfigure}
\begin{subfigure}{.32\textwidth}
\centering
\includegraphics[width=0.7\textwidth]{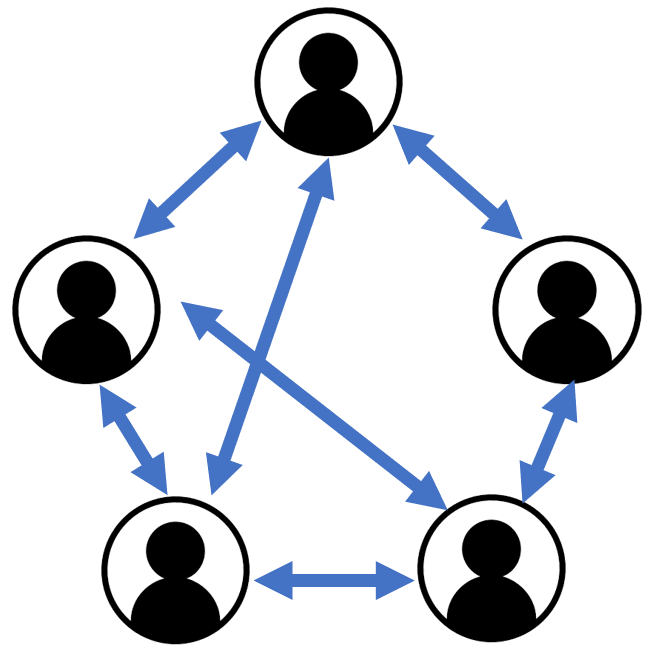}
\caption{Decentralized (mutual trust)}
\end{subfigure}
\begin{subfigure}{.32\textwidth}
\centering
\includegraphics[width=0.7\textwidth]{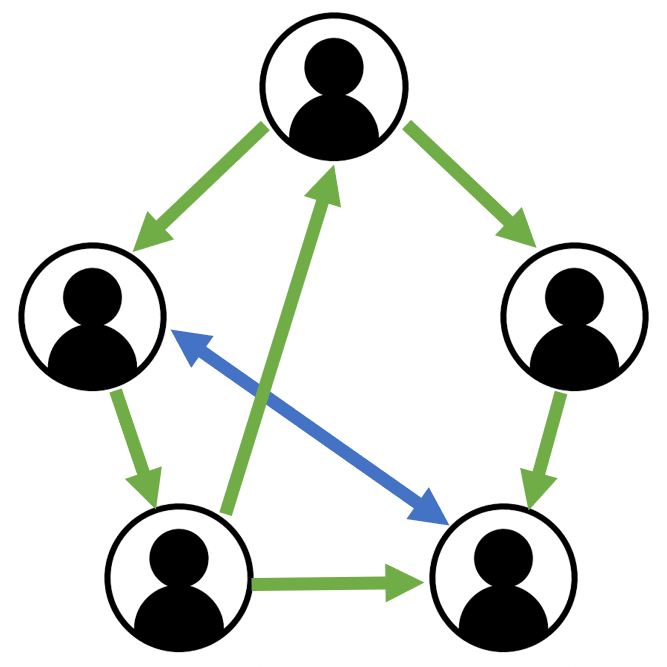}
\caption{Decentralized (singe-sided trust)}
\end{subfigure}
\caption{Different types of architectures.}
\vspace{-0.3cm}
\end{figure*}

In this paper, we consider the social network scenario, where the centralized network is unavailable (e.g., there does not exist a central node that can build up the connection with all users, or the centralized communication cost is not affordable). We make a minimal assumption on the social network: The data may come in a streaming fashion on each user node as the federated learning algorithm runs; the trust between users may be single-sided, where user A trusts user B, but user B may not trust user A.

For the setting mentioned above, we develop a decentralized learning algorithm called online push-sum (OPS) which possesses the following features: 
\begin{itemize}
\item Our algorithm removes some constraints imposed by typical decentralized methods, making it more flexible in allowing arbitrary network topology. Each node only needs to know its out neighbors rather than the global topology.
\item Only models rather than local gradients are exchanged among clients in our algorithm. This scheme can reduce the risk of exposing clients' data privacy \cite{aono2017privacy}. 
\item We provide the rigorous regret analysis for the proposed algorithm and specifically distinguish two components in the online loss function: the adversary component and the stochastic component, which can model clients' private data and internal connections between clients, respectively.
\end{itemize}

\paragraph{Notation}
We adopt the following notation in this paper:
\begin{itemize}
\item 
For random variable $\bxi_t^{(i)}$ subject to distribution $D_t^{(i)}$, we use $\Xi_{n,T}$ and $\mathcal{D}_{n,T}$ to denote the set of random variables and distributions, respectively:
\[
\Xi_{n,T}=\big\{\bxi_t^{(i)}\big\}_{1\leq i\leq n, 1\leq t\leq T},\quad \mathcal{D}_{n,T}=\big\{D_t^{(i)}\big\}_{1\leq i\leq n, 1\leq t\leq T}.  
\]
Notation $\Xi_{n,T}\sim \mathcal{D}_{n,T}$ implies $\bxi_t^{(i)}\sim D_t^{(i)}$ for any $i\in [n]$ and $t\in [T]$.

\item For a decentralized network with $n$ nodes, we use $\W\in\mathbb{R}^{n\times n}$ to present the confusion matrix, where $W_{ij}\geq 0$ is the weight that node $i$ sends to node $j$ ($i,j\in[n]$). $\mathcal{N}_i^{\text{out}}=\{j\in[n]:W_{ij}>0\}$ and $\mathcal{N}_i^{\text{in}}=\{k\in[n]:W_{ki}>0\}$ are also used for denoting the sets of in neighbors of and out neighbors of node $i$ respectively.

\item Norm $\|\cdot\|$ denotes the $\ell_2$ norm $\|\cdot\|_2$ by default.
\end{itemize}

\section{Related Work}
The concept of \textit{federated learning} was first proposed in \cite{mcmahan_communication-efficient_2016}, which advocates a novel learning setting that learns a shared model by aggregating locally-computed gradient updates without centralizing distributed data on devices. Early examples of research into federated learning also include \cite{konecny_federated_2015, konecny_federated_2016}, and a widespread blog article posted by Google AI \cite{mcmahan_google_2017}. To address both statistical and system challenges, \cite{smith_federated_2017} and \cite{caldas_federated_2018} propose a multi-task learning framework for federated learning and its related optimization algorithm, which extends early works SDCA \cite{shalev-shwartz_stochastic_2013, yang_trading_2013, yang_analysis_2013} and COCOA \cite{jaggi_communication-efficient_2014, ma_adding_2015, smith_cocoa:_2016} to the federated learning setting. Among these optimization methods, \textit{Federated Averaging} (FedAvg), proposed by \cite{mcmahan_communication-efficient_2016}, beats conventional synchronized mini-batch SGD with respect to communication rounds, and it converges on non-IID and unbalanced data. Recent rigorous theoretical analyses \cite{stich_local_2018, wang_cooperative_2018, yu_parallel_2018, lin_dont_2018} show that FedAvg is a special case of averaging periodic SGD (also called “local SGD”) which allows nodes to perform local updates and infrequent synchronization between them to communicate less while converging quickly. However, they cannot be applied to the single-sided trust network (asymmetric topology matrix).

Decentralized learning is a typical parallel strategy in which each worker is only required to communicate with its neighbors, meaning that the communication bottleneck (in the parameter server) is removed. It has already been proven that decentralized learning can outperform the traditional centralized learning when the worker number is comparably large under a poor network condition \cite{lian2017can}. There are two main types of decentralized learning algorithms: fixed network topology \cite{he2018cola}, and time-varying \cite{nedic2015distributed,lian2017asynchronous} during training. \cite{yin_d,pmlr-v80-shen18a} show that the decentralized SGD can converge with a comparable convergence rate to the centralized algorithm with less communication to make large-scale model training feasible. \cite{NIPS2018_8028} provides a systematic analysis of decentralized learning.

Online learning has been studied for decades. It is well known that the lower bounds of online optimization methods are $\mathcal{O}(\sqrt{T})$ and $\mathcal{O}(\log T)$ for convex and strongly convex loss functions, respectively \cite{hazan2016introduction,shalev2012online}. In recent years, due to the increasing volume of data, distributed online learning, especially decentralized methods, has attracted much attention. Examples of these works include \cite{kamp2014communication,shahrampour2017distributed,lee2016coordinate}. Notably, \cite{zhao2019decentralized} shares a similar problem definition and theoretical result as our paper. However, single-sided communication is not allowed in their setting, restricting their results.

\section{Problem Setting}
\label{sec:problem-setting}
In this paper, we consider federated learning with $n$ clients (a.k.a., nodes). Each client can be either an edge server or some other kind of computing device such as a smartphone, which has local private data and the local machine learning model $\x_i$ stored on it. We assume the topological structure of the network of these $n$ nodes can be represented by a directed graph $\mathcal{G}=(\text{nodes}:[n],\ \text{edges}:E)$ with vertex set $[n]=\{1,2,\dots,n\}$ and edge set $E\subset [n]\times [n]$. If there exists an edge $(u,v)\in E$, it means node $u$ and node $v$ have network connection and $u$ can directly send messages to $v$.

Let $\x_t^{(i)}$ denote the local model on the $i$-th node at iteration $t$. In each iteration, node $i$ receives a new sample and computes a prediction for this new sample according to the current model $\x_t^{(i)}$ (e.g., it may recommend some items to the user in the online recommendation system). After that, a loss function, $f_{i,t}(\cdot)$ associated with that new sample is received by node $i$. The typical goal of online learning is to minimize the \emph{regret}, which is defined as the difference between the summation of the losses incurred by the nodes' prediction and the corresponding loss of the global optimal model $\x^*$:
\begin{equation*}
  \tilde{\mathcal{R}}_T\coloneqq \sum_{t=1}^T \sum_{i=1}^n \left(f_{i,t}(\x_t^{(i)}) - f_{i,t}(\x^*) \right),
\end{equation*}
where $\x^*=\argmin_{\x} \sum_{t=1}^T\sum_{i=1}^n f_{i,t}(\x)$ is the optimal solution.

However, here we consider a more general online setting: the loss function of the $i$-th node at iteration $t$ is $f_{i,t}(\cdot;\bxi_{i,t})$, which is additionally parametrized by a random variable $\bxi_{i,t}$. This $\bxi_{i,t}$ is drawn from the distribution $D_{i,t}$, and is mutually independent in terms of $i$ and $t$, and we call this part as the \emph{stochastic} component of loss function $f_{i,t}(\cdot;\bxi_{i,t})$. The stochastic component can be utilized to characterize the internal randomness of nodes' data, and the potential connection among different nodes. For example, music preference may be impacted by popular trends on the Internet, which can be formulated by our model by letting $D_{i,t}\equiv D_t$ for all $i\in[n]$ with some time-varying distribution $D_t$. On the other hand, function $f_{i,t}(\cdot;\cdot)$ is the \emph{adversarial} component of the loss, which may include, for example, user's profile, location, etc. Therefore, the objective regret naturally becomes the expectation of all the past losses:
\begin{equation}
  \label{eq:def_regret}
  \mathcal{R}_T\coloneqq
  \E_{\Xi_{n,T}\sim \mathcal{D}_{n,T}}\left\{\sum_{t=1}^T \sum_{i=1}^n \left(f_{i,t}(\x_t^{(i)};\bxi_t^{(i)}) - f_{i,t}(\x^*;\bxi_t^{(i)}) \right)\right\}
\end{equation}
with $\x^*=\argmin_{\x} \E_{\Xi_{n,T}\sim \mathcal{D}_{n,T}}\sum_{t=1}^T\sum_{i=1}^n f_{i,t}(\x;\bxi_t^{(i)})$.

One benefit of the above formulation is that it partially resolves the non-I.I.D. issue in federated learning. A fundamental assumption in many traditional distributed machine learning methods is that the data samples stored on all nodes are I.I.D., which fails to hold for federated learning since the data on each user's device is highly correlated to that user's preferences and habits. However, our formulation does not require the I.I.D. assumption to hold for the adversarial component at all. Even though the random samples for the stochastic component still need to be independent, they are allowed to be drawn from different distributions.

Finally, one should note that online optimization also includes stochastic optimization (i.e., data samples are drawn from a fixed distribution) and offline optimization (i.e., data are already collected before optimization begins) as its typical cases \cite{shalev2012online}. Hence, our setting covers a wide range of applications.

\section{Online Push-Sum Algorithm}

In this section, we define the construction of the confusion matrix and introduce the proposed algorithm.

\subsection{Construction of Confusion Matrix}


One important parameter of the algorithm is the confusion matrix $\W$. $\W$ is a matrix depending on the network topology $\mathcal{G}$, which means $W_{ij}=0$ if there is no directed edge $(i,j)$ in $\mathcal{G}$. If the value of $W_{ij}$ is large, the node $i$ will have a stronger impact on node $j$. However, $\W$ still allows flexibility where users can specify their weights associated with existing edges, meaning that even if there is a physical connection between two nodes, the nodes can decide against using the channel. For example, even if $(i,j)\in E$, user still can set $W_{ij}=0$ if user $i$ thinks node $j$ is not trustworthy and therefore chooses to exclude the channel from $i$ to $j$.

Of course, there are still some constraints over $\W$. $\W$ must be a row stochastic matrix (i.e., each entry in $\W$ is non-negative, and the summation of each row is 1). This assumption is different from the one in classical decentralized distributed optimization, which typically assumes $\W$ is symmetric and doubly stochastic (e.g., \cite{duchi2011dual}) (i.e., the summations of both rows and columns are all 1). Such a requirement is quite restrictive, because not all networks admit a doubly stochastic matrix (\cite{gharesifard2010does}), and relinquishing double stochasticity can introduce bias in optimization \cite{ram2010distributed,tsianos2012distributed}. As a comparison, our assumption that $\W$ is row stochastic will avoid such concerns since any non-negative matrix with at least one positive entry on each row (which is already implied by the connectivity of the graph) can be easily normalized into row stochastic. The relaxation of this assumption is crucial for federated learning, considering that the federated learning system usually involves complex network topology due to its large number of clients. Moreover, since each node only needs to make sure the summation of its out-weights is 1, there is no need for it to be aware of the global network topology, which significantly benefits the implementation of the federated learning system. Meanwhile, requiring $\W$ to be symmetric rules out the possibility of using asymmetric network topology and adopting sing-sided trust, while our method does not have such restriction.

\begin{algorithm*}[t]
\begin{multicols}{2}
\begin{algorithmic}[1]
\Require Learning rate $\gamma$, number of iterations $T$, and the confusion matrix $\W$.    
 \State Initialize $\x_{0}^{(i)} =\bm{z}_{0}^{(i)} = \bm{0}$, $\omega_0^{(i)} = 1$ for all $i\in [n]$
  \For {$t=0,1, ..., T-1$}
           \State $\slash\slash$ For all users (say the $i$-th node $i\in[n]$)
           \State Apply local model $\x_t^{(i)}$ and suffer loss $f_{i,t}(\x_t^{(i)};\bxi_t^{(i)})$
           \State Locally computes the intermedia variable
            \[\z_{t + \frac{1}{2}}^{(i)} = \z_t^{(i)} - \gamma \nabla f_{i,t}\left(\x_t^{(i)};\bxi_t^{(i)}\right)\]
           \State Send $\left(W_{ij}\z_{t + \frac{1}{2}}^{(i)}, W_{ij}\omega_t^{(i)}  \right)  $ to all $j\in \mathcal{N}_i^\text{out}$
           \State Update \begin{align*}
\z_{t+1}^{(i)} =& \sum_{k\in\mathcal{N}_i^\text{in}}W_{ki}\z_{t+\frac{1}{2}}^{(k)}\\
\omega_{t+1}^{(i)} = & \sum_{k\in\mathcal{N}_i^\text{in}}W_{ki}\omega_t^{(k)}\\
\x_{t+1}^{(i)} = & \frac{\z_{t+1}^{(i)}}{\omega_{t+1}^{(i)}}
\end{align*}
\EndFor
\State \Return $\x_T^{(i)}$ to node $i$
\end{algorithmic}
\end{multicols}
\caption{Online Push-Sum (OPS) Algorithm}
\label{alg}
\end{algorithm*}

\subsection{Algorithm Description}
The proposed online push-sum algorithm is presented in Algorithm \ref{alg}. The algorithm design mainly follows the pattern of push-sum algorithm \cite{tsianos2012push}, but here we further generalize it into the online setting.

The algorithm mainly consists of three steps:
\begin{enumerate}
\item Local update: each client $i$ applies the current local model $\x^{(i)}_t$ to obtain the loss function, based on which an intermediate local model $\z^{(i)}_{t+\frac{1}{2}}$ is computed;
\item Push: the weighted variable $W_{ij}\z^{(i)}_{t+\frac{1}{2}}$ is sent to $j$ for all its out neighbors $j$;
\item Sum: all the received $W_{ji}\z^{(j)}_{t+\frac{1}{2}}$ is summed and normalized to obtain the new model $\x^{(i)}_{t+1}$.
\end{enumerate}
It should be noted an auxiliary variables $\z^{(i)}_{t+\frac{1}{2}}$ and $\z^{(i)}_{t+1}$ are used in the algorithm. Actually, they are used in the algorithm to clarify the description but may be easily removed in the practical implementation. Besides, another variable $\omega^{(i)}_{t+1}$ is also introduced, which is the normalizing factor of $\z^{(i)}_{t+1}$. $\omega^{(i)}_{t+1}$ plays an important role in the push-sum algorithm, since $\W$ is not doubly stochastic in our setting, and it is possible that the total weight $i$ receives does not equal to 1. The introduction of the normalizing factor $\omega_t^{(i)}$ helps the algorithm avoid issues brought by that $\W$ is not doubly stochastic. Furthermore, when $\W$ becomes doubly stochastic, it can be easily verified that $\omega_t^{(i)}\equiv 1$ and $\x^{(i)}_t\equiv\z^{(i)}_t$ for any $i$ and $t$, then Algorithm \ref{alg} reduces to the distributed online gradient method proposed by \cite{zhao2019decentralized}.

In the algorithm, the local data, which is encoded in the gradient $f_{i,t}(\x_t^{(i)};\bxi_t)$ \cite{shokri2015privacy}, is only utilized in updating local model. What neighboring nodes exchanges are only limited to the local models.

\subsection{Regret Analysis}
In this subsection, we provide regret bound analysis of OPS algorithm. Due to the limitation of space, the detail proof is deferred to the supplementary material. For convenience, we first denote
\[
F_{i,t}(\x)\coloneqq \E_{\bxi_{i,t}\sim D_{i,t}} f_{i,t}(\x;\bxi_{i,t}).
\]
To carry out the analysis, the following assumptions are required:

\begin{assumption} \label{assump}
We make the following assumptions throughout this paper: (1) The topological graph $\mathcal{G}$ is strongly connected; $\W$ is row stochastic; (2) For any $i\in[n]$ and $t\in[T]$, the loss function $f_{i,t}(\x;\bxi_{i,t})$ is convex in $\x$; (3) The problem domain is bounded such that for any two vectors $\x$ and $\bm{y}$ we always have $\|\x-\bm{y}\|^2\leq R$;  (4) The  norm of the expected gradient $\nabla F_{i,t}(\cdot)$ is bounded, i.e., there exist constant $G>0$ such that $\|\nabla F_{i,t}(\x)\|^2\leq G^2$ for any $i$, $t$ and $\x$; (5) The gradient variance is also bounded by $\sigma^2$, namely,
    \[
      \E_{\bxi_{i,t}\sim D_{i,t}}\|\nabla f_{i,t}(\x;\bxi_{i,t}) - \nabla F_{i,t}(\x)\|^2\leq \sigma^2.
    \]  
\end{assumption}
Here constant $G$ provides an upper bound for the adversarial component. On the other hand, $\sigma$ measures the magnitude of stochasticity brought by the stochastic component. When $\sigma=0$, the problem setting simply reduces back to normal distributed online learning. The strong connectivity assumption is necessary to ensure that the information can be exchanged between any two nodes. As for the convexity and the domain boundedness assumptions, they are quite common in online learning literature, such as \cite{hazan2016introduction}. 

Equipped with these assumptions, now we are ready to present the convergence result:

\begin{theorem}
\label{theory:corollary}
If we set
\begin{equation}
  \label{eq:set_gamma}
  \gamma=\frac{\sqrt{n}R}{\sigma\sqrt{ 1 + nC_2}+G\sqrt{n C_1 T}},  
\end{equation}
the regret of OPS can be bounded by:
\begin{equation}
  \mathcal{R}_T \leq \mathcal{O}\left( nGR\sqrt{T} + \sigma R\left( 1 + \sqrt{nC_2}\right)\sqrt{nT} \right),
\end{equation}
where $C_1$ and $C_2$ are two constants defined in the appendix.
\end{theorem}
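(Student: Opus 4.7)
My plan is to reduce the regret analysis to a centralized online gradient descent on a ``virtual average'' iterate and then absorb the discrepancy between the local models and this average via a push-sum consensus estimate. Concretely, define the row-average $\bar{\z}_t := \tfrac{1}{n}\sum_i \z_t^{(i)}$. Since $\W$ is row-stochastic one has $\bm{1}^\top\W^\top = \bm{1}^\top$, so the mixing step preserves $\sum_i\z_t^{(i)}$, and $\bar{\z}_t$ satisfies the clean recursion $\bar{\z}_{t+1} = \bar{\z}_t - \gamma\bar{g}_t$ with $\bar{g}_t := \tfrac{1}{n}\sum_i \nabla f_{i,t}(\x_t^{(i)};\bxi_t^{(i)})$. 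Expanding $\|\bar{\z}_{t+1}-\x^*\|^2$ and telescoping in $t$ gives the standard OGD inequality $n\sum_t\langle\bar{g}_t,\bar{\z}_t-\x^*\rangle \leq \tfrac{nR}{2\gamma}+\tfrac{n\gamma}{2}\sum_t\|\bar{g}_t\|^2$, which combined with convexity of $f_{i,t}$ and the splitting $\x_t^{(i)}-\x^* = (\bar{\z}_t-\x^*)+(\x_t^{(i)}-\bar{\z}_t)$ reduces the task to controlling (i) the variance $\E\|\bar{g}_t\|^2$ and (ii) the consensus cross-term involving $\x_t^{(i)}-\bar{\z}_t$.

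For (i), mutual independence of $\{\bxi_t^{(i)}\}_i$ given the past decouples the noises and yields $\E\|\bar{g}_t\|^2 \leq G^2 + \sigma^2/n$, which is precisely where the $1/\sqrt{n}$ variance reduction in the final bound originates. The main obstacle is (ii): because $\W$ is only row-stochastic, the usual symmetric-gossip bound $\|\W^t - n^{-1}\bm{1}\bm{1}^\top\| = O(\lambda^t)$ is not available. I would instead combine the push-sum identity $\x_t^{(i)}=\z_t^{(i)}/\omega_t^{(i)}$ with the Perron--Frobenius theory of primitive column-stochastic matrices: under strong connectivity, $\W^\top$ admits a unique Perron vector $\pi>0$ with $\W^\top\pi = \pi$, $\bm{1}^\top\pi = 1$, and $(\W^\top)^t \to \pi\bm{1}^\top$ geometrically, so $\omega_t^{(i)} = [(\W^\top)^t\bm{1}]_i \to n\pi_i > 0$ and the push-sum denominators stay uniformly bounded away from zero. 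Unrolling the linear dynamics $\z_t = (\W^\top)^t \z_0 - \gamma\sum_{s<t}(\W^\top)^{t-s}g_s$, comparing coordinate-wise against $\omega_t^{(i)}\bar{\z}_t$, and invoking the geometric contraction of $(\W^\top)^t - \pi\bm{1}^\top$ then yields an estimate of the form $\sum_{t,i}\E\|\x_t^{(i)}-\bar{\z}_t\|^2 = \mathcal{O}\!\bigl(\gamma^2(G^2 C_1 T + \sigma^2 C_2)\bigr)$, where the network-dependent constants $C_1, C_2$ encode the spectral gap of $\W$ and $1/\min_i \pi_i$. Handling this non-normal, asymmetric mixing operator without the orthogonal-projection geometry of the doubly-stochastic case is the delicate step.

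Bounding the remaining cross-term by Young's inequality (using $\E\|\nabla f_{i,t}\|^2\leq G^2+\sigma^2$) and substituting the variance and consensus estimates yields an upper bound of the schematic form $\mathcal{R}_T = \mathcal{O}\bigl(nR/\gamma + (nG^2+\sigma^2)\gamma T + (G^2 C_1 T+\sigma^2 C_2)\gamma\bigr)$. The step-size in \eqref{eq:set_gamma} is engineered precisely to balance the $1/\gamma$ descent term against these $\gamma$-dependent contributions, producing the stated $\mathcal{O}\bigl(nGR\sqrt{T}+\sigma R(1+\sqrt{nC_2})\sqrt{nT}\bigr)$ regret.
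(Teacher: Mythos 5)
Your proposal is correct and follows essentially the same route as the paper: the same decomposition of the regret into an average-iterate OGD term (telescoping $\|\bar{\z}_t-\x^*\|^2$ with the $G^2+\sigma^2/n$ variance bound) plus a consensus cross-term handled by Young's inequality and a push-sum consensus estimate of order $\gamma^2\sum_t\|\G_t\|_F^2$. The only cosmetic difference is that you package the geometric mixing of $(\W^\top)^t$ via Perron--Frobenius for the fixed primitive matrix, whereas the paper quotes the corresponding ergodicity and weight-lower-bound lemmas (with explicit constants $C$, $q$, $\delta_{\min}$) from the push-sum literature; the content and the final step-size balancing are identical.
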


Note that when $n=1$ and $\sigma=0$, where the problem setting just reduces to normal online optimization, the implied regret bound $\mathcal{O}(GR\sqrt{T})$ exactly matches the lower bound of online optimization \cite{hazan2016introduction}. Moreover, our result also matches the convergence rate of centralized online learning where $q=0$ for fully connected networks. Hence, we can conclude that the OPS algorithm has optimal dependence on $T$.

This bound has a linear dependence on the number of nodes $n$, but it is easy to understand. First, we have defined the regret to be the summation of the losses on all the nodes. Increasing $n$ makes the regret naturally larger. Second, our federated learning setting is different from the typical distributed learning in that I.I.D. assumption does not hold here. Each node contains distinct local data that may be drawn from totally different distributions. Therefore, adding more nodes is not helpful for decreasing the regret of existing clients.

Moreover, we also prove that the difference of the model $\x_t^{(i)}$ on each worker could be bounded using the following theorem:

\begin{theorem}
\label{theory:corollary2}
If we set $\gamma$ as \eqref{eq:set_gamma}, the difference of the model $\x_t^{(i)}$ on each worker admits a faster convergence rate than regret:
\begin{align*}
\frac{1}{T}\sum_i^{n}\sum_{t=0}^T \left\|  \x^{(i)}_{t+1} - \overline{\z}_{t+1} \right \|^2 \leq &\mathcal{O}\left(\frac{nGR + nR\sigma}{T}\right).
\end{align*}
\end{theorem}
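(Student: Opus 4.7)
The plan is to bound the consensus error between each local iterate $\x^{(i)}_{t+1}$ and the node-averaged shadow variable $\overline{\z}_{t+1}:=\tfrac{1}{n}\sum_j\z^{(j)}_{t+1}$ using a standard push-sum mixing argument, then sum over $i$ and $t$ and plug in the prescribed step size.

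First I would rewrite the updates in matrix form: stacking the $\z^{(i)}_t$ as rows of $\Z_t$ and the stochastic gradients $\nabla f_{i,t}(\x^{(i)}_t;\bxi^{(i)}_t)$ as rows of $\G_t$, Algorithm~\ref{alg} becomes $\Z_{t+1}=\W^\top(\Z_t-\gamma\G_t)$ and $\bm\omega_{t+1}=\W^\top\bm\omega_t$ with $\Z_0=\bm 0$ and $\bm\omega_0=\mathbf 1$. Since $\W\mathbf 1=\mathbf 1$, pre-multiplying by $\tfrac{1}{n}\mathbf 1^\top$ shows that the node-average obeys the clean recursion $\overline{\z}_{t+1}=\overline{\z}_t-\tfrac{\gamma}{n}\sum_i\nabla f_{i,t}(\x^{(i)}_t;\bxi^{(i)}_t)$ and that $\sum_i\omega^{(i)}_t=n$ for every $t$. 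Unrolling both recursions expresses $\z^{(i)}_{t+1}$ and $\omega^{(i)}_{t+1}$ as explicit sums of powers of $\W^\top$ applied to past gradients.

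The core ingredient, to be established as a lemma in the appendix in the spirit of the push-sum analysis of \cite{tsianos2012push}, is the geometric-mixing bound
\begin{equation*}
\left|\frac{[\W^k]_{ji}}{\omega^{(i)}_k}-\frac{1}{n}\right|\le Cq^k\quad\text{for all }i,j,k,
\end{equation*}
for some $q\in(0,1)$ and $C>0$ (the constants absorbed into $C_1,C_2$ in Theorem~\ref{theory:corollary}). Subtracting the unrolled expressions for $\x^{(i)}_{t+1}=\z^{(i)}_{t+1}/\omega^{(i)}_{t+1}$ and $\overline{\z}_{t+1}$ and applying the lemma row-wise gives
\begin{equation*}
\bigl\|\x^{(i)}_{t+1}-\overline{\z}_{t+1}\bigr\|\le C\gamma\sum_{s=0}^{t}q^{t-s+1}\sum_{j=1}^{n}\bigl\|\nabla f_{j,s}(\x^{(j)}_s;\bxi^{(j)}_s)\bigr\|.
\end{equation*}
Squaring, applying Cauchy--Schwarz against the geometric kernel, and splitting each gradient into its expectation $\nabla F_{j,s}$ (norm bounded by $G$ by Assumption~\ref{assump}(4)) plus mean-zero noise (variance bounded by $\sigma^2$ by Assumption~\ref{assump}(5)) so that cross-terms vanish under expectation by independence of the $\bxi^{(j)}_s$, delivers a per-iterate bound $\E\|\x^{(i)}_{t+1}-\overline{\z}_{t+1}\|^2=\mathcal O(\gamma^2(G^2+\sigma^2)\cdot\mathrm{poly}(n))$.

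Summing over $i$ and $t$, dividing by $T$, and substituting $\gamma$ from \eqref{eq:set_gamma}---whose quadratic dependence collapses the $T$ coming from the time sum---then yields the claimed $\mathcal O((nGR+nR\sigma)/T)$ rate. The main obstacle is the push-sum consensus lemma itself: because $\W$ is only row-stochastic (not symmetric or doubly stochastic), the classical symmetric spectral-gap contraction does not apply, and one must argue directly about the ratio $[\W^k]_{ji}/\omega^{(i)}_k$ via mixing-time estimates for the non-reversible chain associated with $\W$, which is where the constants $C_1$ and $C_2$ are ultimately determined. Once that lemma is granted the remainder is algebraic bookkeeping, and the improvement from the regret's $\mathcal O(1/\sqrt T)$ rate to the consensus $\mathcal O(1/T)$ rate stems from the fact that the consensus error carries a full $\gamma^2$ factor whereas the regret carries only one factor of $\gamma$.
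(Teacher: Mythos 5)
Your proposal follows essentially the same route as the paper: rewrite the updates in matrix form, unroll them, control the consensus error via a geometric push-sum mixing estimate, handle the resulting geometric convolution with a Cauchy--Schwarz/summation argument (the paper's Lemma~\ref{lemma:seq}), bound the per-node gradient second moment by $G^2+\sigma^2$, and substitute the step size so that the extra factor of $\gamma$ relative to the regret yields the $\mathcal{O}(1/T)$ rate. The only cosmetic difference is that you package the mixing estimate as a single bound on the ratio $[\W^k]_{ji}/\omega^{(i)}_k$, whereas the paper splits it into an ergodicity bound on the matrix products (Lemma~\ref{lem:concentr}) together with a separate lower bound $\omega^{(i)}_k\geq n\delta_{\min}$ (Lemma~\ref{lem:prod_lower}).
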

Hence, the models on all clients' devices will finally converge to the same one with rate $\mathcal{O}(1/T)$.

\subsection{Privacy Protection}
Although the main focus of this work is to provide a solution to deal with the practical scenario with asymmetric social networks, our proposed algorithm also has several advantages concerning privacy protection. First, as we have mentioned, OPS runs in a decentralized way and exchanges models instead of gradients or training samples, which is already proven effective for reducing the risk of privacy leakage \cite{bellet2017personalized}.  Second, OPS runs in a decentralized and asymmetric fashion. These properties create difficulties for many attacking methods, such as \cite{nasr2018comprehensive}. In order to infer the data of other clients, the attacker needs to observe the reactions of other nodes after the attack is injected, which is impossible when the connections are single-sided. Even though the attack will spread among the whole network and finally return to the attacker, it is still hard for the attacker to distinguish whether the information he receives from its neighbors is already affected by the attack or not, since he is unaware of the global topology.


\section{Experiments}
We compare the performance of our proposed \pushsum\ method with that of Decentralized Online Gradient method (DOL) and Centralized
Online Gradient method (COL), and then evaluate the effectiveness of OPS in different network size and network topology density settings. 
\subsection{Implementation and Settings}We consider online logistic regression with squared $\ell_{2}$ norm regularization: 
$f_{i, t}\left(\mathbf{x} ; \bxi_{i, t}\right)= \log \left(1+\exp \left(-\mathbf{y}_{i, t} \mathbf{A}_{i, t}^{\top} \mathbf{x}\right)\right)+\frac{\lambda}{2}\|\mathbf{x}\|^{2}$,
where regularization coefficient $\lambda$ is set to $10^{-4}$. $\bxi_{i, t}$ is the stochastic component of the function $f_{i, t}$ introduced in Section \textsection~\ref{sec:problem-setting}, which is encoded in the random data sample $(\mathbf{A}_{i, t}, \mathbf{y}_{i,t})$. We evaluate the learning performance by measuring the average loss 
$\frac{1}{n T}\mathbb{E}_{\Xi_{n, T}}  \sum_{i=1}^{n} \sum_{t=1}^{T} f_{i, t}\left(\mathbf{x}_{i, t} ; \bxi_{i, t}\right)$,
instead of using the dynamic regret \eqref{eq:def_regret} directly, since the optimal reference point $x^*$ is the same for all the methods. The learning rate $\gamma$ in Algorithm \ref{alg} is tuned
to be optimal for each dataset separately. The experiment implementation is based on Python 3.7.0, PyTorch 1.2.0, NetworkX 2.3, and scikit-learn 0.20.3. The source code is developed using \texttt{FedML} \cite{chaoyanghe2020fedml}, an open-source research library for federated learning. Please refer to \texttt{FedML}'s source code for details.

\textbf{Dataset}. Experiments were run on two real-world public datasets: \textit{SUSY}\footnote{\url{https://www.csie.ntu.edu.tw/~cjlin/libsvmtools/datasets/binary.html##SUSY}} and \textit{Room-Occupancy}\footnote{\url{https://archive.ics.uci.edu/ml/datasets/Occupancy+Detection+}}. SUSY and \textit{Room-Occupancy} are both large-scale binary classification
datasets, containing 5,000,000 and 20,566 samples, respectively. Each dataset is split into two subsets: the stochastic data and the adversarial
data. The stochastic data is generated by allocating a fraction of samples (e.g., 50\% of the whole dataset) to nodes randomly and uniformly. The adversarial data is generated by conducting on the remaining dataset to produce $n$ clusters and then allocating every cluster to a node. As we analyzed previously, only the scattered stochastic data can boost the model performance by intra-node communication. For each node, this pre-acquired data is transformed into streaming data to simulate online learning.

\begin{figure*}[t]
  \centering
  \begin{subfigure}[b]{0.23\textwidth}
       \centering
      \includegraphics[width=\textwidth]{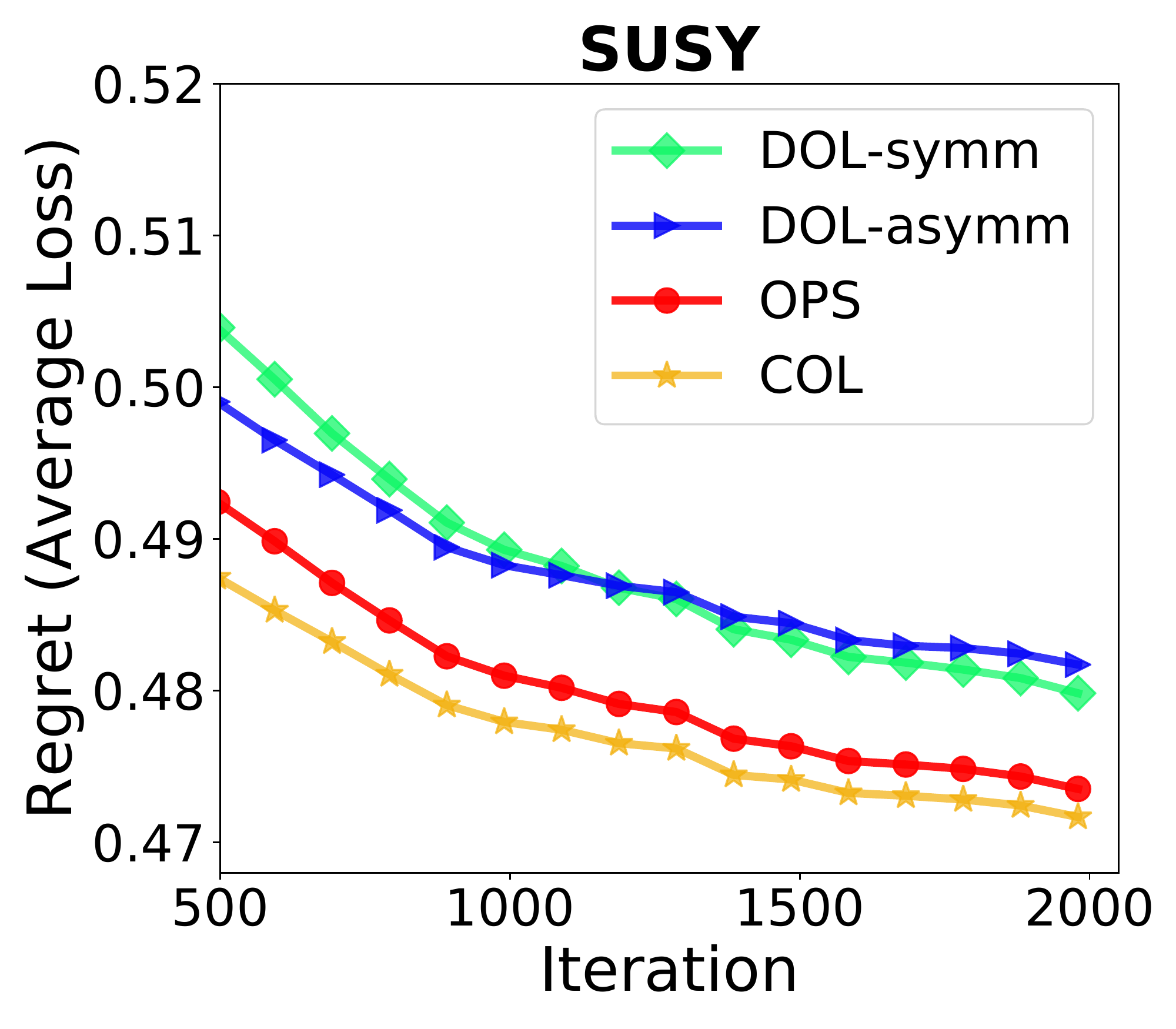}
      \caption{stochastic=100\% \\ ($n=128$, \#Neighb.=32)}
      \label{fig:1-SUSY-beta100}
  \end{subfigure}
  \begin{subfigure}[b]{0.23\textwidth}
       \centering
      \includegraphics[width=\textwidth]{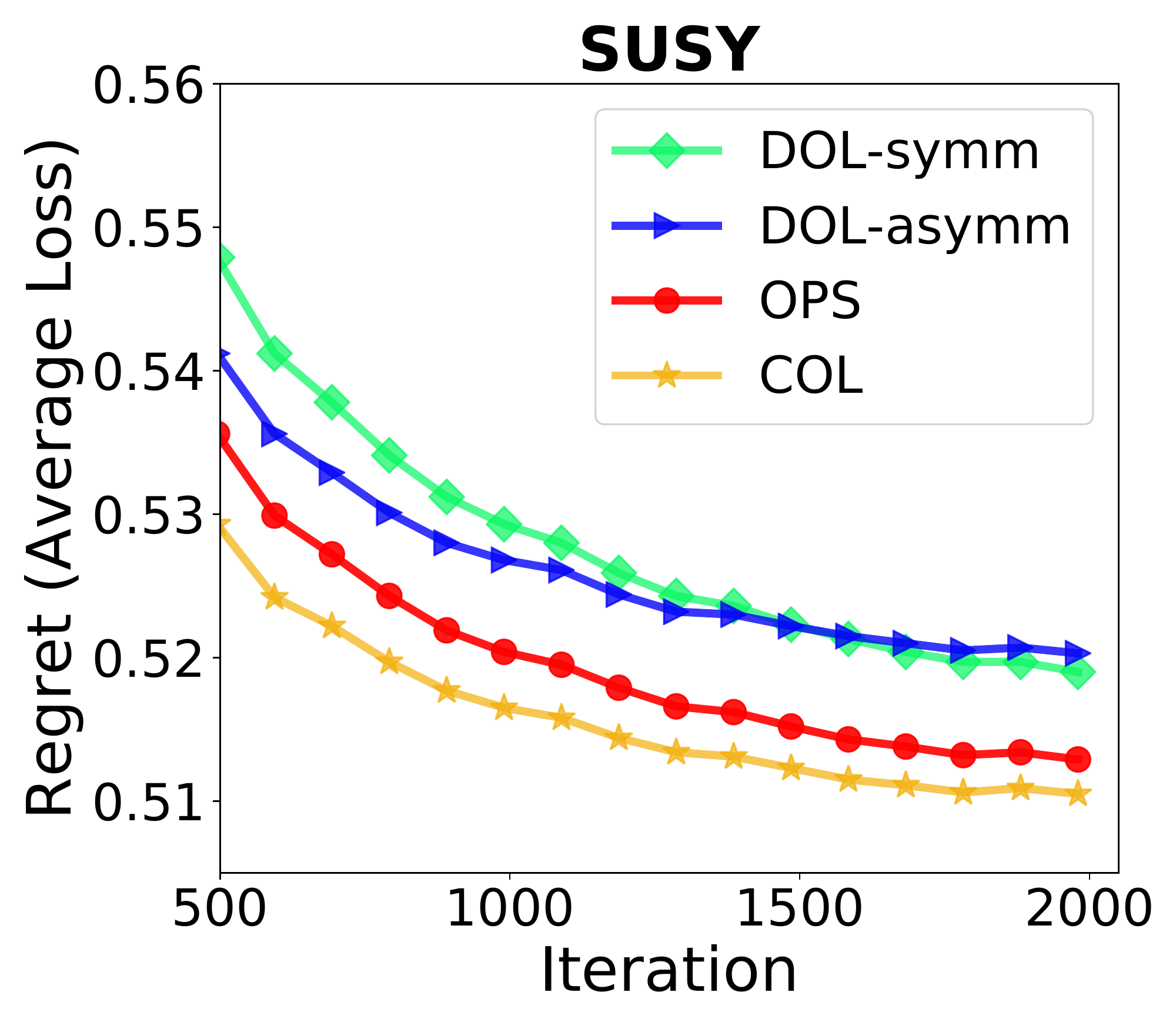}
      \caption{stochastic=50\% \\($n=128$, \#Neighb.=32)}
      \label{fig:1-SUSY-beta50}
  \end{subfigure}
 \begin{subfigure}[b]{0.23\textwidth}
       \centering
      \includegraphics[width=\textwidth]{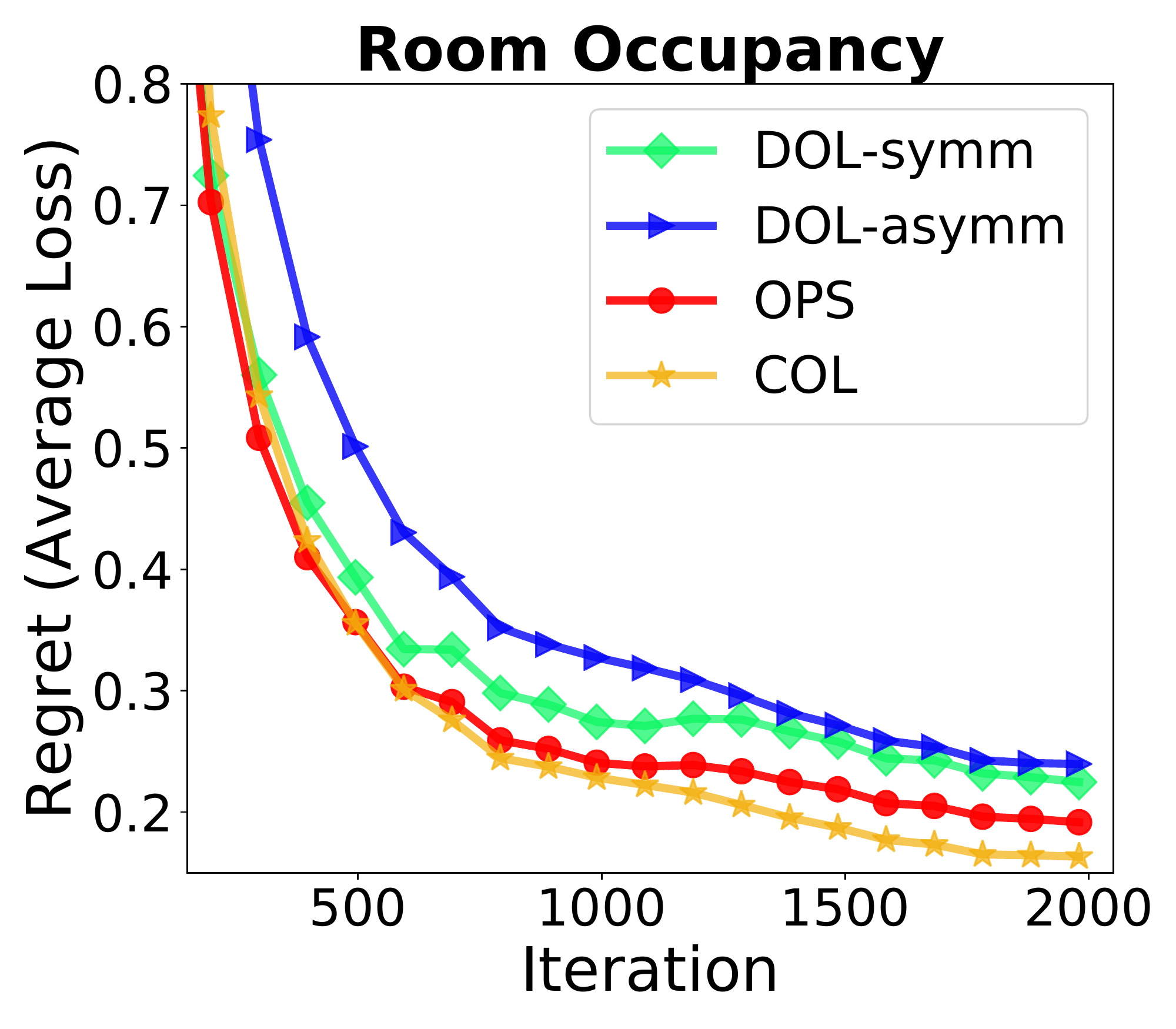}
      \caption{stochastic=100\% \\($n=20$, \#Neighb.=10)}
      \label{fig:1-Room-beta100}
  \end{subfigure}
  \begin{subfigure}[b]{0.23\textwidth}
       \centering
      \includegraphics[width=\textwidth]{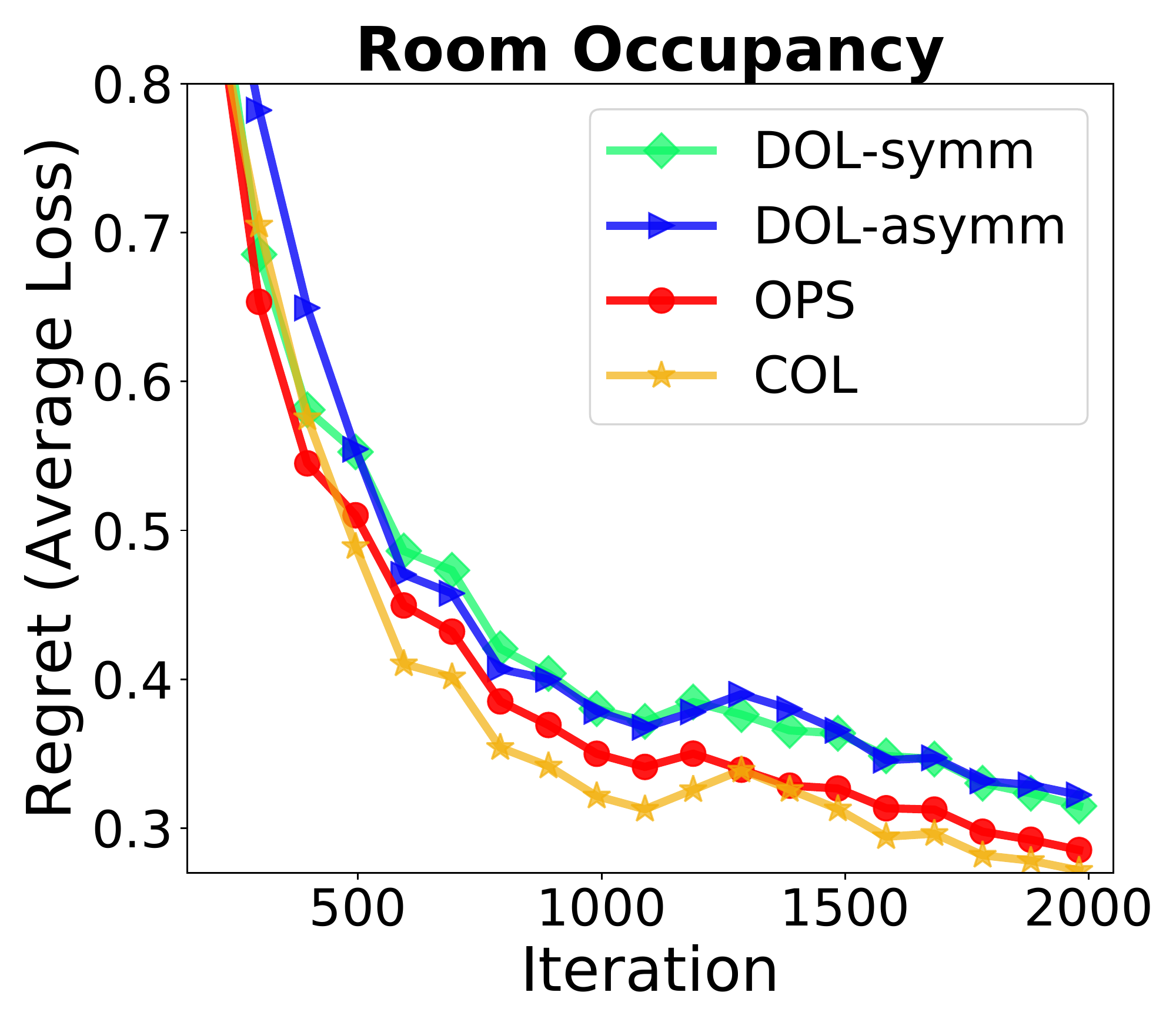}
      \caption{stochastic=50\% \\ ($n=20$, \#Neighb.=10)}
      \label{fig:1-Room-beta50}
  \end{subfigure}
     \caption{Comparison among OPS, DOL (Decentralized Online Learning) and COL (Centralized Online Learning)}
     \label{fig:comparision-DOL-COL}
  \vspace{-0.3cm}
\end{figure*}

\subsection{Comparison with DOL and COL}
 To compare OPS with DOL and COL, a network size with 128 nodes and 20 nodes are selected for SUSY and Room-Occupancy, respectively. 
For COL, its confusion matrix $\W$ is fully-connected (doubly stochastic matrix). 
For DOL and OPS, they are run with the same network topology and the same row stochastic matrix (asymmetric confusion matrix) to maintain a fair comparison. 
Such asymmetric confusion is constructed by setting each node's number of neighbors as a random value which is smaller than a fixed upper bound and also ensures the strong connectivity of the whole network (this upper-bound neighbor number is set to 32 for the SUSY dataset, while 10 is set for the Room-Occupancy dataset). Since DOL typically requires the network to be the symmetric and doubly stochastic confusion matrix, DOL is run in two settings for comparison. In the first setting, in order to meet the assumption of the symmetry and doubly stochasticity, all unidirectional connections are removed in the confusion matrix so that the row stochastic confusion matrix degenerates into a doubly stochastic matrix. This setting is labeled as \textit{DOL-Symm} in Figure \ref{fig:comparision-DOL-COL}. In another setting, DOL is forced to run on the asymmetric network where each node naively aggregates its received models without considering whether its sending weights are equal to its receiving weights. \textit{DOL-Asymm} is used to label this setting in Figure \ref{fig:comparision-DOL-COL}. 

As illustrated in Figure \ref{fig:comparision-DOL-COL}, in both two datasets, OPS outperforms \textit{DOL-Symm} in the row stochastic confusion matrix. This demonstrates that incorporating unidirectional communication can help to boost the model performance. In other words, OPS gains better performance in the single-sided trust network under the setting of federated learning. OPS also works better than \textit{DOL-Asymm}. Although \textit{DOL-Asymm} utilizes additional unidirectional connections, in some cases its performance is even worse than \textit{DOL-Symm} (e.g., Figure \ref{fig:1-SUSY-beta100}). This phenomenon is most likely attributed to its simple aggregation pattern, which causes decreased performance in \textit{DOL-Asymm} when removing the doubly stochastic matrix assumption. These two observations confirm the effectiveness of OPS in a row stochastic confusion matrix, which is consistent with our theoretical analysis. 

Comparing Figure \ref{fig:1-Room-beta100} and Figure \ref{fig:1-Room-beta50}, we also observe that when increasing the ratio of the stochastic component, the average loss (regret) becomes smaller. It is reasonable that OPS achieves slightly worse performance than COL because OPS works in a sparsely connected network where information exchanging is much less than COL. We use the COL as the baseline in all experiments.

Only the number of iterations instead of the actual running time is considered in the experiment. It is redundant to present the actual running time. Because the centralized method requires more time for each iteration due to the network congestion in the central node, OPS usually outperforms COL in terms of running time.

\subsection{Evaluation on Different Network Sizes}
\label{sec:network_size}

\begin{figure*}[t]
     \centering
     \begin{subfigure}[b]{0.24\textwidth}
          \centering
         \includegraphics[width=\textwidth]{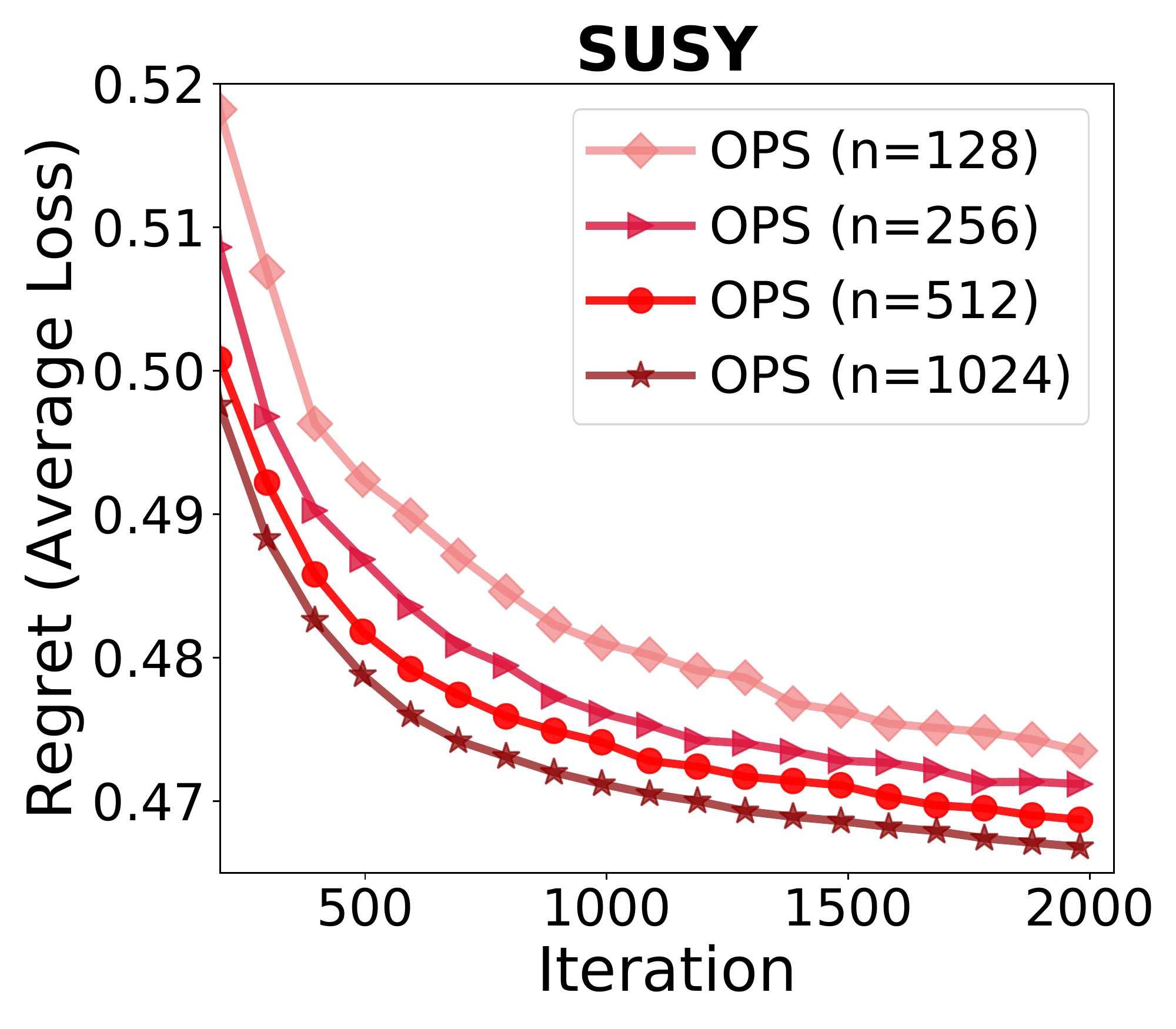}
         \caption{stochastic=100\%}
         \label{fig:2-SUSY-beta100-size}
     \end{subfigure}
     \begin{subfigure}[b]{0.24\textwidth}
          \centering
         \includegraphics[width=\textwidth]{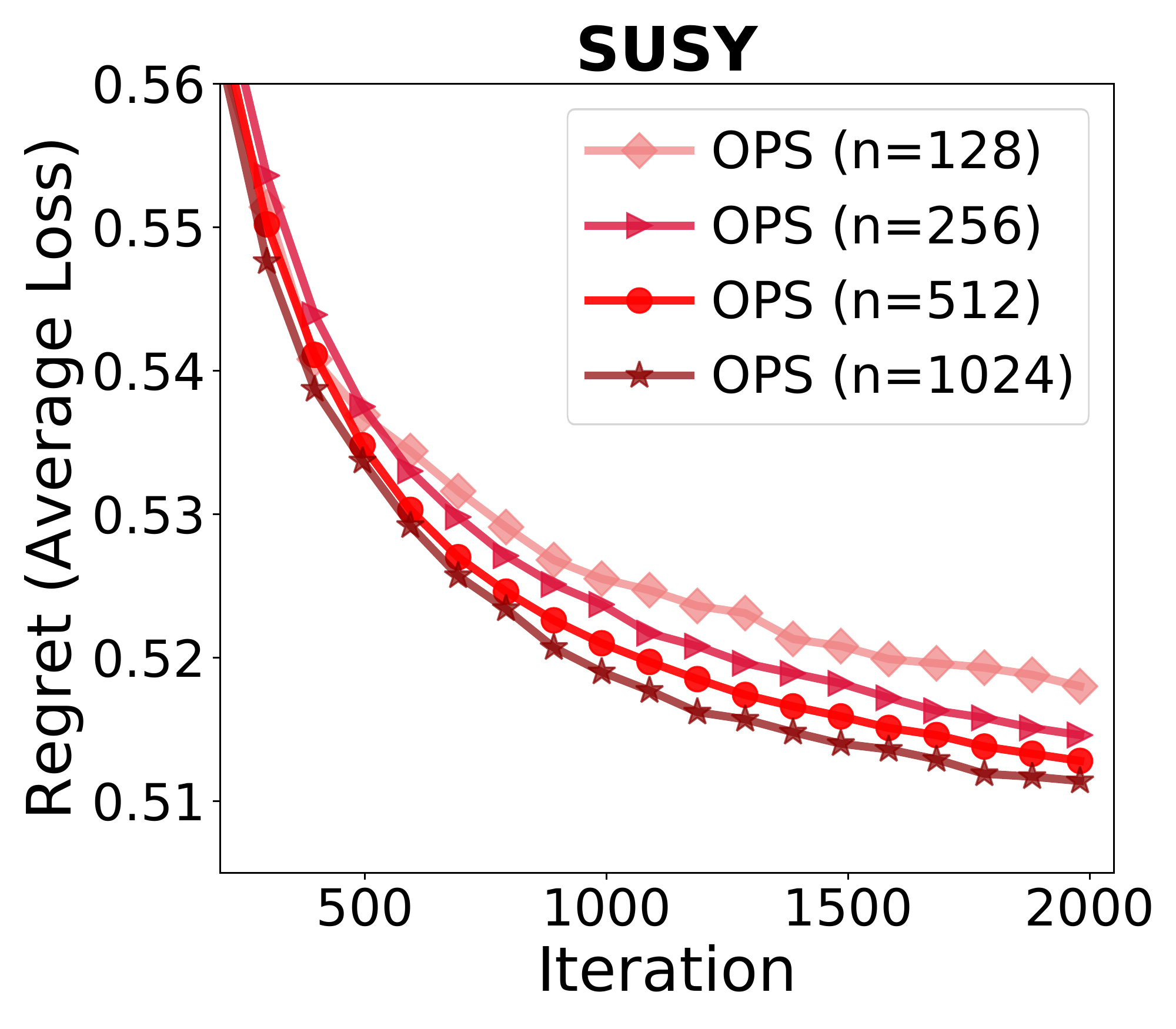}
         \caption{stochastic=50\%}
         \label{fig:2-SUSY-beta50-size}
     \end{subfigure}
     \begin{subfigure}[b]{0.24\textwidth}
      \centering
     \includegraphics[width=\textwidth]{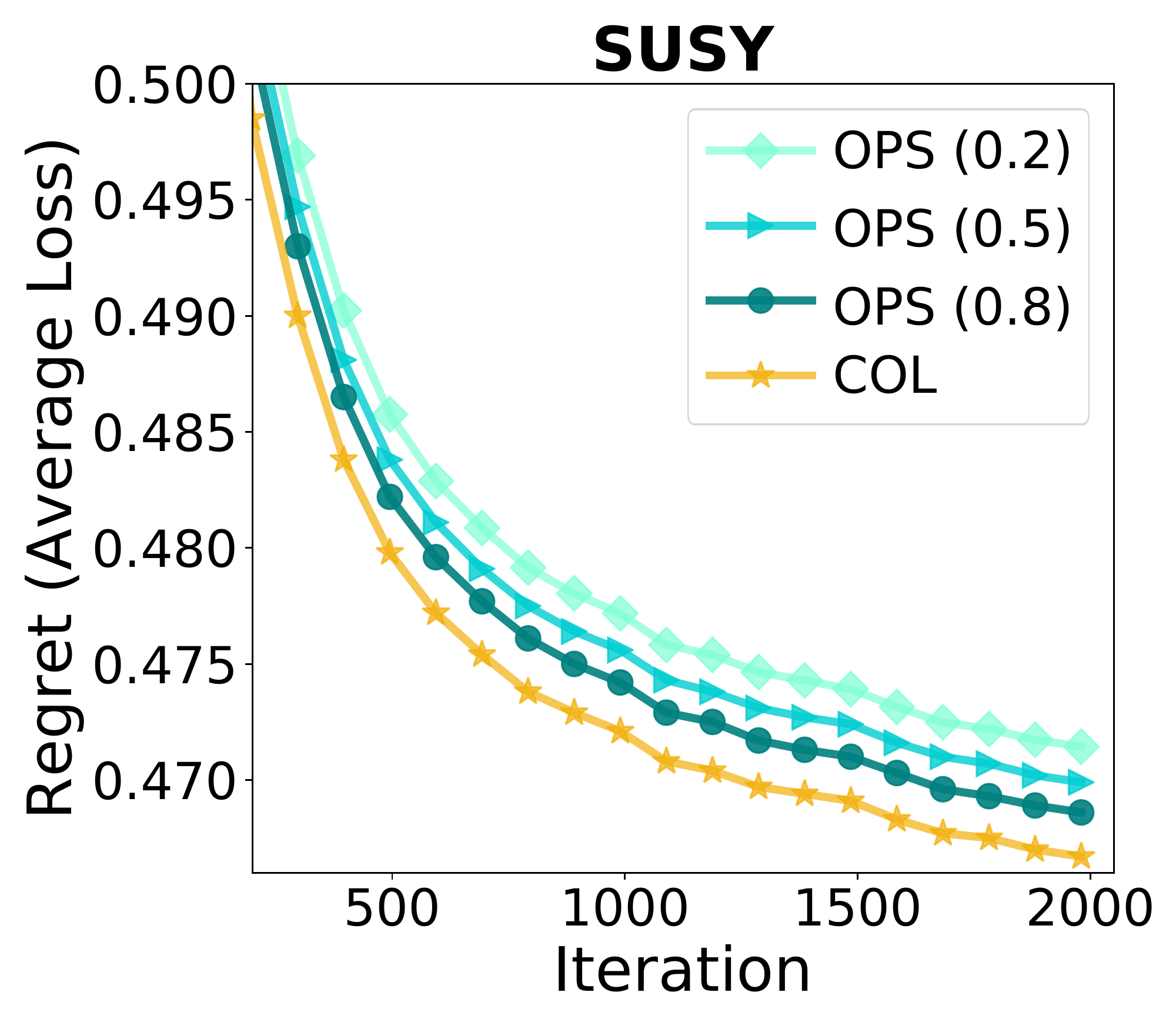}
     \caption{stochastic=100\%}
     \label{fig:2-SUSY-beta100-density}
    \end{subfigure}
    \begin{subfigure}[b]{0.24\textwidth}
          \centering
        \includegraphics[width=\textwidth]{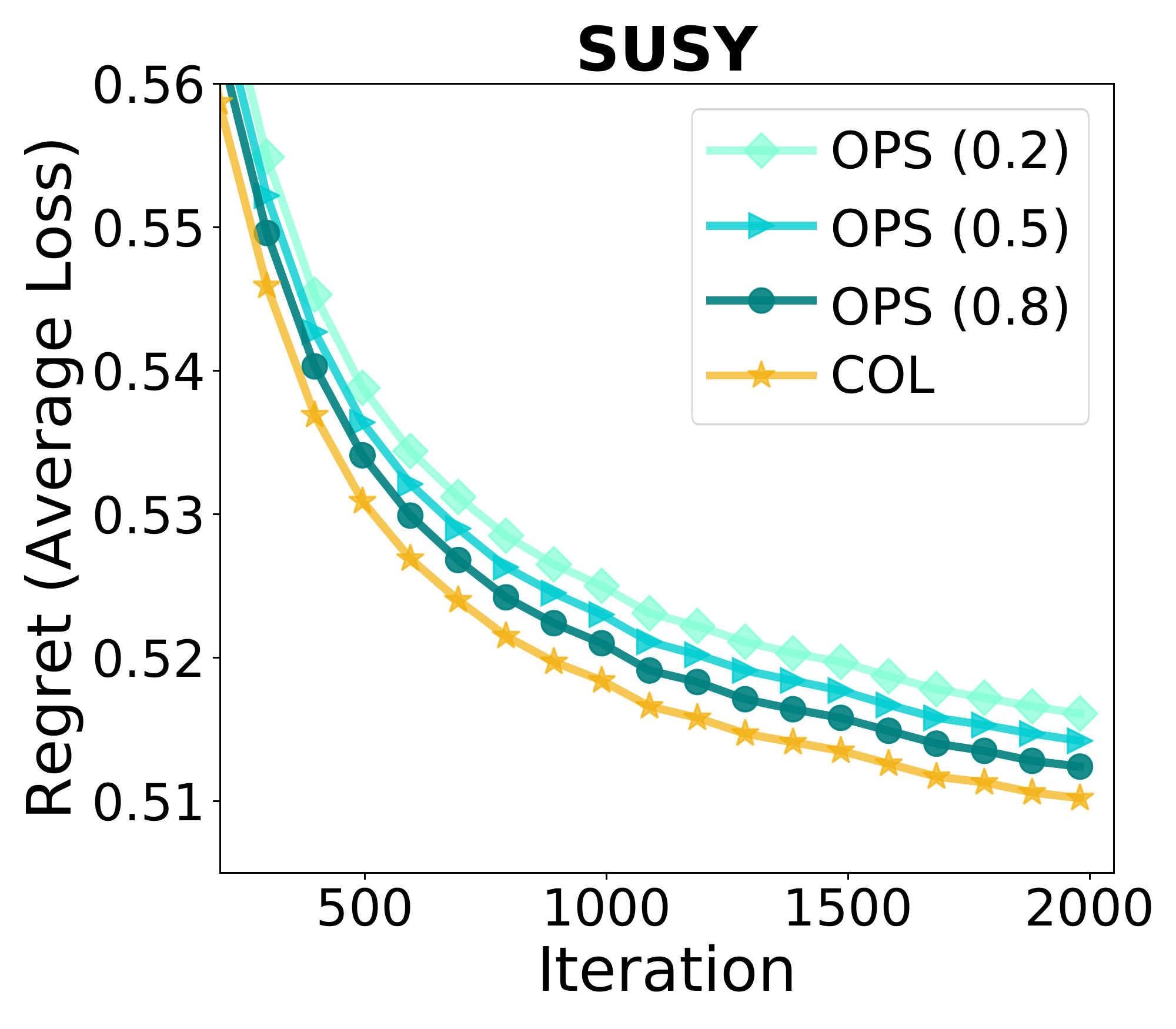}
        \caption{stochastic=50\%}
        \label{fig:2-SUSY-beta50-density}
    \end{subfigure}
\caption{Evaluation on different network sizes and densities}
\label{fig:networksize}
\vspace{-0.3cm}
\end{figure*}

Figure \ref{fig:2-SUSY-beta100-size} and \ref{fig:2-SUSY-beta50-size} summarizes the evaluation of OPS in different network sizes (in the \textit{SUSY} dataset, 128, 256, 512, 1024 are set). The upper-bound neighbor number is aligned to the same value among different network sizes to isolate its impact. As we can see, in every dataset, the average loss (regret) curve in different network sizes is close on a small scale. These observations demonstrate OPS is robust to the network size. Furthermore, the average loss (regret) is smaller in larger network size (i.e., the curve of the $n=1024$ network size is lower than others), which also demonstrates that more stochastic samples provided by more nodes can naturally accelerate the convergence. Due to limitation of space, the results on the other dataset is deferred to the appendix.

\subsection{Evaluation on Network Density}
\label{sec:network_density}
We also evaluate the performance of OPS in different network densities. We fix the network size to 512 for \textit{SUSY} dataset. Network density is defined as the ratio of the upper-bound random neighbor number per node to the size of the network (e.g., if the ratio is 0.5 in \textit{SUSY}, it means 256 is set as the upper-bound neighbor number for each node).  We can see from Figure \ref{fig:2-SUSY-beta100-density} and \ref{fig:2-SUSY-beta50-density} that as the network density increased, the average loss (regret) decreased. This observation also proves that our proposed OPS algorithm can work well in different network densities, and can gain more benefits from a denser row stochastic matrix. This benefit can also be understood intuitively: in a federated learning network, a user's model performance will improve if it communicates with more users. The results of \textit{Room Occupancy} are also deferred to the appendix.

\subsection{Comparison with local online gradient descent}
To justify the necessity of communication, we compare OPS with the local online gradient descent (local OGD), where every node trains a local model without communicating with
others. The results are described in the appendix.


\section{Conclusions}
Decentralized federated learning with single-sided trust is a promising framework for solving a wide range of problems. In this paper, the online push-sum algorithm is developed for this setting, which is able to handle complex network topology and is proven to have an optimal convergence rate. The regret-based online problem formulation also extends its applications. We tested the proposed OPS algorithm in various experiments, which have empirically justified its efficiency.

\section*{Broader Impact}
The federated learning framework this paper proposed offers an effective tool to cooperatively train machine learning models in the applications where the willing of sharing information is single-sided. Such kind of applications abounds in real lives. Moreover, as we stated, such a framework has the potential of protecting user' privacy in a better way, while we believe privacy protection is one of the most important problem in such an information era. However, in some cases, single-sided communication may increase the information asymmetry among people. Part of the users will always get to know the information faster than the others, which may be considered as a kind of inequality of humans.

\bibliographystyle{unsrt}
\bibliography{ref}

\begin{thebibliography}{10}

\bibitem{konevcny2016federated}
Jakub Kone{\v{c}}n{\`y}, H~Brendan McMahan, Felix~X Yu, Peter Richt{\'a}rik,
  Ananda~Theertha Suresh, and Dave Bacon.
\newblock Federated learning: Strategies for improving communication
  efficiency.
\newblock {\em arXiv preprint arXiv:1610.05492}, 2016.

\bibitem{smith2017federated}
Virginia Smith, Chao-Kai Chiang, Maziar Sanjabi, and Ameet~S Talwalkar.
\newblock Federated multi-task learning.
\newblock In {\em Advances in Neural Information Processing Systems}, pages
  4424--4434, 2017.

\bibitem{yang2019federated}
Qiang Yang, Yang Liu, Tianjian Chen, and Yongxin Tong.
\newblock Federated machine learning: Concept and applications.
\newblock {\em ACM Transactions on Intelligent Systems and Technology (TIST)},
  10(2):12, 2019.

\bibitem{shokri2015privacy}
Reza Shokri and Vitaly Shmatikov.
\newblock Privacy-preserving deep learning.
\newblock In {\em Proceedings of the 22nd ACM SIGSAC conference on computer and
  communications security}, pages 1310--1321. ACM, 2015.

\bibitem{vanhaesebrouck2017decentralized}
Paul Vanhaesebrouck, Aur{\'e}lien Bellet, and Marc Tommasi.
\newblock Decentralized collaborative learning of personalized models over
  networks.
\newblock In {\em International Conference on Artificial Intelligence and
  Statistics (AISTATS)}, 2017.

\bibitem{bellet2018personalized}
Aur{\'e}lien Bellet, Rachid Guerraoui, Mahsa Taziki, and Marc Tommasi.
\newblock Personalized and private peer-to-peer machine learning.
\newblock In {\em International Conference on Artificial Intelligence and
  Statistics}, pages 473--481, 2018.

\bibitem{aono2017privacy}
Yoshinori Aono, Takuya Hayashi, Lihua Wang, Shiho Moriai, et~al.
\newblock Privacy-preserving deep learning via additively homomorphic
  encryption.
\newblock {\em IEEE Transactions on Information Forensics and Security},
  13(5):1333--1345, 2017.

\bibitem{mcmahan_communication-efficient_2016}
H.~Brendan McMahan, Eider Moore, Daniel Ramage, Seth Hampson, and Blaise
  Agüera~y Arcas.
\newblock Communication-{Efficient} {Learning} of {Deep} {Networks} from
  {Decentralized} {Data}.
\newblock {\em arXiv:1602.05629 [cs]}, February 2016.
\newblock arXiv: 1602.05629.

\bibitem{konecny_federated_2015}
Jakub Konečný, Brendan McMahan, and Daniel Ramage.
\newblock Federated {Optimization}:{Distributed} {Optimization} {Beyond} the
  {Datacenter}.
\newblock {\em arXiv:1511.03575 [cs, math]}, November 2015.
\newblock arXiv: 1511.03575.

\bibitem{konecny_federated_2016}
Jakub Konečný, H.~Brendan McMahan, Felix~X. Yu, Peter Richtárik,
  Ananda~Theertha Suresh, and Dave Bacon.
\newblock Federated {Learning}: {Strategies} for {Improving} {Communication}
  {Efficiency}.
\newblock {\em arXiv:1610.05492 [cs]}, October 2016.
\newblock arXiv: 1610.05492.

\bibitem{mcmahan_google_2017}
Brendan McMahan and Daniel Ramage.
\newblock Google {AI} {Blog}: {Federated} {Learning}: {Collaborative} {Machine}
  {Learning} without {Centralized} {Training} {Data}, April 2017.

\bibitem{smith_federated_2017}
Virginia Smith, Chao-Kai Chiang, Maziar Sanjabi, and Ameet~S. Talwalkar.
\newblock Federated multi-task learning.
\newblock In {\em Advances in {Neural} {Information} {Processing} {Systems}},
  pages 4424--4434, 2017.

\bibitem{caldas_federated_2018}
Sebastian Caldas, Virginia Smith, and Ameet Talwalkar.
\newblock Federated {Kernelized} {Multi}-{Task} {Learning}.
\newblock {\em The Conference on Systems and Machine Learning}, page~3, 2018.

\bibitem{shalev-shwartz_stochastic_2013}
Shai Shalev-Shwartz and Tong Zhang.
\newblock Stochastic dual coordinate ascent methods for regularized loss
  minimization.
\newblock {\em Journal of Machine Learning Research}, 14(Feb):567--599, 2013.

\bibitem{yang_trading_2013}
Tianbao Yang.
\newblock Trading computation for communication: {Distributed} stochastic dual
  coordinate ascent.
\newblock In {\em Advances in {Neural} {Information} {Processing} {Systems}},
  pages 629--637, 2013.

\bibitem{yang_analysis_2013}
Tianbao Yang, Shenghuo Zhu, Rong Jin, and Yuanqing Lin.
\newblock Analysis of distributed stochastic dual coordinate ascent.
\newblock {\em arXiv preprint arXiv:1312.1031}, 2013.

\bibitem{jaggi_communication-efficient_2014}
Martin Jaggi, Virginia Smith, Martin Takác, Jonathan Terhorst, Sanjay
  Krishnan, Thomas Hofmann, and Michael~I. Jordan.
\newblock Communication-efficient distributed dual coordinate ascent.
\newblock In {\em Advances in neural information processing systems}, pages
  3068--3076, 2014.

\bibitem{ma_adding_2015}
Chenxin Ma, Virginia Smith, Martin Jaggi, Michael~I. Jordan, Peter Richtárik,
  and Martin Takáč.
\newblock Adding vs. {Averaging} in {Distributed} {Primal}-{Dual}
  {Optimization}.
\newblock {\em arXiv:1502.03508 [cs]}, February 2015.
\newblock arXiv: 1502.03508.

\bibitem{smith_cocoa:_2016}
Virginia Smith, Simone Forte, Chenxin Ma, Martin Takac, Michael~I. Jordan, and
  Martin Jaggi.
\newblock {CoCoA}: {A} {General} {Framework} for {Communication}-{Efficient}
  {Distributed} {Optimization}.
\newblock {\em arXiv:1611.02189 [cs]}, November 2016.
\newblock arXiv: 1611.02189.

\bibitem{stich_local_2018}
Sebastian~U. Stich.
\newblock Local {SGD} {Converges} {Fast} and {Communicates} {Little}.
\newblock September 2018.

\bibitem{wang_cooperative_2018}
Jianyu Wang and Gauri Joshi.
\newblock Cooperative {SGD}: {A} unified {Framework} for the {Design} and
  {Analysis} of {Communication}-{Efficient} {SGD} {Algorithms}.
\newblock August 2018.

\bibitem{yu_parallel_2018}
Hao Yu, Sen Yang, and Shenghuo Zhu.
\newblock Parallel {Restarted} {SGD} with {Faster} {Convergence} and {Less}
  {Communication}: {Demystifying} {Why} {Model} {Averaging} {Works} for {Deep}
  {Learning}.
\newblock {\em arXiv:1807.06629 [cs, math]}, July 2018.
\newblock arXiv: 1807.06629.

\bibitem{lin_dont_2018}
Tao Lin, Sebastian~U. Stich, and Martin Jaggi.
\newblock Don't {Use} {Large} {Mini}-{Batches}, {Use} {Local} {SGD}.
\newblock {\em arXiv:1808.07217 [cs, stat]}, August 2018.
\newblock arXiv: 1808.07217.

\bibitem{lian2017can}
Xiangru Lian, Ce~Zhang, Huan Zhang, Cho-Jui Hsieh, Wei Zhang, and Ji~Liu.
\newblock Can decentralized algorithms outperform centralized algorithms? a
  case study for decentralized parallel stochastic gradient descent.
\newblock In {\em Advances in Neural Information Processing Systems}, pages
  5330--5340, 2017.

\bibitem{he2018cola}
Lie He, An~Bian, and Martin Jaggi.
\newblock Cola: Decentralized linear learning.
\newblock In {\em Advances in Neural Information Processing Systems}, pages
  4541--4551, 2018.

\bibitem{nedic2015distributed}
Angelia Nedi{\'c} and Alex Olshevsky.
\newblock Distributed optimization over time-varying directed graphs.
\newblock {\em IEEE Transactions on Automatic Control}, 60(3):601--615, 2015.

\bibitem{lian2017asynchronous}
Xiangru Lian, Wei Zhang, Ce~Zhang, and Ji~Liu.
\newblock Asynchronous decentralized parallel stochastic gradient descent.
\newblock In {\em International Conference on Machine Learning}, 2018.

\bibitem{yin_d}
Tianyu Wu, Kun Yuan, Qing Ling, Wotao Yin, and Ali H.~Sayed.
\newblock Decentralized consensus optimization with asynchrony and delays.
\newblock {\em IEEE Transactions on Signal and Information Processing over
  Networks}, PP:1--1, 04 2017.

\bibitem{pmlr-v80-shen18a}
Zebang Shen, Aryan Mokhtari, Tengfei Zhou, Peilin Zhao, and Hui Qian.
\newblock Towards more efficient stochastic decentralized learning: Faster
  convergence and sparse communication.
\newblock In Jennifer Dy and Andreas Krause, editors, {\em Proceedings of the
  35th International Conference on Machine Learning}, volume~80 of {\em
  Proceedings of Machine Learning Research}, pages 4624--4633,
  Stockholmsm{\"a}ssan, Stockholm Sweden, 10--15 Jul 2018. PMLR.

\bibitem{NIPS2018_8028}
Youjie Li, Mingchao Yu, Songze Li, Salman Avestimehr, Nam~Sung Kim, and
  Alexander Schwing.
\newblock Pipe-sgd: A decentralized pipelined sgd framework for distributed
  deep net training.
\newblock In S.~Bengio, H.~Wallach, H.~Larochelle, K.~Grauman, N.~Cesa-Bianchi,
  and R.~Garnett, editors, {\em Advances in Neural Information Processing
  Systems 31}, pages 8056--8067. Curran Associates, Inc., 2018.

\bibitem{hazan2016introduction}
Elad Hazan et~al.
\newblock Introduction to online convex optimization.
\newblock {\em Foundations and Trends{\textregistered} in Optimization},
  2(3-4):157--325, 2016.

\bibitem{shalev2012online}
Shai Shalev-Shwartz et~al.
\newblock Online learning and online convex optimization.
\newblock {\em Foundations and Trends{\textregistered} in Machine Learning},
  4(2):107--194, 2012.

\bibitem{kamp2014communication}
Michael Kamp, Mario Boley, Daniel Keren, Assaf Schuster, and Izchak Sharfman.
\newblock Communication-efficient distributed online prediction by dynamic
  model synchronization.
\newblock In {\em Joint European Conference on Machine Learning and Knowledge
  Discovery in Databases}, pages 623--639. Springer, 2014.

\bibitem{shahrampour2017distributed}
Shahin Shahrampour and Ali Jadbabaie.
\newblock Distributed online optimization in dynamic environments using mirror
  descent.
\newblock {\em IEEE Transactions on Automatic Control}, 63(3):714--725, 2017.

\bibitem{lee2016coordinate}
Soomin Lee, Angelia Nedi{\'c}, and Maxim Raginsky.
\newblock Coordinate dual averaging for decentralized online optimization with
  nonseparable global objectives.
\newblock {\em IEEE Transactions on Control of Network Systems}, 5(1):34--44,
  2016.

\bibitem{zhao2019decentralized}
Yawei Zhao, Chen Yu, Peilin Zhao, and Ji~Liu.
\newblock Decentralized online learning: Take benefits from others' data
  without sharing your own to track global trend.
\newblock {\em arXiv preprint arXiv:1901.10593}, 2019.

\bibitem{duchi2011dual}
John~C Duchi, Alekh Agarwal, and Martin~J Wainwright.
\newblock Dual averaging for distributed optimization: Convergence analysis and
  network scaling.
\newblock {\em IEEE Transactions on Automatic control}, 57(3):592--606, 2011.

\bibitem{gharesifard2010does}
Bahman Gharesifard and Jorge Cort{\'e}s.
\newblock When does a digraph admit a doubly stochastic adjacency matrix?
\newblock In {\em Proceedings of the 2010 American Control Conference}, pages
  2440--2445. IEEE, 2010.

\bibitem{ram2010distributed}
S~Sundhar Ram, Angelia Nedi{\'c}, and Venugopal~V Veeravalli.
\newblock Distributed stochastic subgradient projection algorithms for convex
  optimization.
\newblock {\em Journal of optimization theory and applications},
  147(3):516--545, 2010.

\bibitem{tsianos2012distributed}
Konstantinos~I Tsianos and Michael~G Rabbat.
\newblock Distributed dual averaging for convex optimization under
  communication delays.
\newblock In {\em 2012 American Control Conference (ACC)}, pages 1067--1072.
  IEEE, 2012.

\bibitem{tsianos2012push}
Konstantinos~I Tsianos, Sean Lawlor, and Michael~G Rabbat.
\newblock Push-sum distributed dual averaging for convex optimization.
\newblock In {\em 2012 IEEE 51st IEEE Conference on Decision and Control
  (CDC)}, pages 5453--5458. IEEE, 2012.

\bibitem{bellet2017personalized}
Aur{\'e}lien Bellet, Rachid Guerraoui, Mahsa Taziki, and Marc Tommasi.
\newblock Personalized and private peer-to-peer machine learning.
\newblock {\em arXiv preprint arXiv:1705.08435}, 2017.

\bibitem{nasr2018comprehensive}
Milad Nasr, Reza Shokri, and Amir Houmansadr.
\newblock Comprehensive privacy analysis of deep learning: Stand-alone and
  federated learning under passive and active white-box inference attacks.
\newblock {\em arXiv preprint arXiv:1812.00910}, 2018.

\bibitem{chaoyanghe2020fedml}
Chaoyang He, Songze Li, Jinhyun So, Mi~Zhang, Hongyi Wang, Xiaoyang Wang,
  Praneeth Vepakomma, Abhishek Singh, Hang Qiu, Li~Shen, Peilin Zhao, Yan Kang,
  Yang Liu, Ramesh Raskar, Qiang Yang, Murali Annavaram, and Salman Avestimehr.
\newblock Fedml: A research library and benchmark for federated machine
  learning.
\newblock {\em arXiv preprint arXiv:2007.13518}, 2020.

\bibitem{nedic2014distributed}
Angelia Nedi{\'c} and Alex Olshevsky.
\newblock Distributed optimization over time-varying directed graphs.
\newblock {\em IEEE Transactions on Automatic Control}, 60(3):601--615, 2014.

\bibitem{nedic2016stochastic}
Angelia Nedi{\'c} and Alex Olshevsky.
\newblock Stochastic gradient-push for strongly convex functions on
  time-varying directed graphs.
\newblock {\em IEEE Transactions on Automatic Control}, 61(12):3936--3947,
  2016.

\bibitem{assran2018asynchronous}
Mahmoud Assran and Michael Rabbat.
\newblock Asynchronous subgradient-push.
\newblock {\em arXiv preprint arXiv:1803.08950}, 2018.

\bibitem{assran2018stochastic}
Mahmoud Assran, Nicolas Loizou, Nicolas Ballas, and Michael Rabbat.
\newblock Stochastic gradient push for distributed deep learning.
\newblock {\em arXiv preprint arXiv:1811.10792}, 2018.

\end{thebibliography}

\newpage
\onecolumn
\vspace{1cm}
\begin{center}
\LARGE{Appendix}  \vspace{0.8cm} 
\end{center}
\paragraph{Notations:} Below we use the following notation in our proof
\begin{itemize}
\item $\nabla F_t(\X_t) := \left[\nabla F_{1,t}\left(\x_t^{(1)}\right),\cdots, \nabla F_{n,t}\left(\x_t^{(n)}\right)\right] $
\item $\X_{t} := \left[\x^{(1)}_{t},\x^{(2)}_{t},...,\x^{(n)}_{t}\right]$
\item $\G_t := \left[ \nabla f_{1,t}(\x_t^{1};\bxi_t^{1}),\dots, \nabla f_{n,t}(\x_t^{n};\bxi_t^{n})\right]$
\end{itemize}
Here we first present the proof Theorem~\ref{theory:corollary}, then we will present some key lemmas along with the proof of Theorem~\ref{theory:corollary2}.
\section{Proof to Theorem~\ref{theory:corollary} and Theorem~\ref{theory:corollary2}}
The following theorem is the key to prove Theorem~\ref{theory:corollary}:
\begin{theorem}\label{main:theorem}

    For the online push-sum algorithm with step size $\gamma>0$, it holds that
    \begin{equation}
      \mathcal{R}_T \leq G^2Tn\gamma C_1 + \sigma^2T\gamma ( 1 + nC_2) + \frac{nR^2}{2\gamma},
    \end{equation}
    where
    \begin{equation*}
    C_1:=  \frac{8 Cq}{\delta_{\min}(1-q)} + 1,\quad
    C_2:= \frac{2 Cq}{\delta_{\min}(1-q)},
    \end{equation*}
    and $C$, $q$ and $\delta_{min}$ are some constants defined in later lemmas.
    \end{theorem}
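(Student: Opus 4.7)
The plan is to reduce the regret to that of a virtual centralized SGD running on the averaged sequence $\overline{\z}_t := \frac{1}{n}\sum_i \z_t^{(i)}$, and then pay for the drift of the local iterates via the push-sum mixing lemma. First, row stochasticity of $\W$ gives $\W\mathbf{1}=\mathbf{1}$, so summing the mixing step over $i$ preserves column totals, which after substituting the local update yields
\begin{equation*}
  \overline{\z}_{t+1} \;=\; \overline{\z}_t \;-\; \frac{\gamma}{n}\sum_{i=1}^n\nabla f_{i,t}\bigl(\x_t^{(i)};\bxi_t^{(i)}\bigr), \qquad \sum_{i=1}^n \omega_t^{(i)}=n \quad \forall\, t.
\end{equation*}
Thus $\overline{\z}_t$ evolves exactly as centralized SGD, but with gradients evaluated at the local iterates $\x_t^{(i)}$ rather than at $\overline{\z}_t$.

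Next I would use convexity of $F_{i,t}$ (Assumption~\ref{assump}(2)), together with the tower property for $\bxi_t^{(i)}\sim D_t^{(i)}$, to split
\begin{equation*}
  F_{i,t}(\x_t^{(i)}) - F_{i,t}(\x^*) \;\le\; \langle \nabla F_{i,t}(\x_t^{(i)}),\, \x_t^{(i)}-\overline{\z}_t \rangle \;+\; \langle \nabla F_{i,t}(\x_t^{(i)}),\, \overline{\z}_t-\x^* \rangle.
\end{equation*}
Summing over $(i,t)$, the second inner product is handled by the standard distance-telescoping trick on $\|\overline{\z}_{t+1}-\x^*\|^2 - \|\overline{\z}_t-\x^*\|^2$: the cross term reconstructs (in expectation) exactly this inner product, while the square term gives $\frac{\gamma^2}{n^2}\mathbb{E}\|\sum_i \nabla f_{i,t}(\x_t^{(i)};\bxi_t^{(i)})\|^2$, which Assumption~\ref{assump}(4)--(5) together with independence of $\bxi_t^{(i)}$ across $i$ bound by $\gamma^2(G^2 + \sigma^2/n)$. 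Telescoping and using $\|\overline{\z}_0-\x^*\|^2\le R^2$ (Assumption~\ref{assump}(3)) produces the $\tfrac{nR^2}{2\gamma}$ term plus an $\mathcal{O}(n\gamma T G^2 + \gamma T \sigma^2)$ second-moment contribution. The first inner product is dispatched by Young's inequality together with Assumption~\ref{assump}(4), reducing it to the consensus error $\sum_{t,i}\mathbb{E}\|\x_t^{(i)}-\overline{\z}_t\|^2$.

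The main obstacle is bounding this consensus error under an asymmetric row-stochastic $\W$, where one cannot simply use spectral properties of a symmetric mixing matrix. I would invoke the classical push-sum mixing lemma: strong connectivity (Assumption~\ref{assump}(1)) yields constants $C>0$, $q\in(0,1)$, and a uniform lower bound $\omega_t^{(i)}\ge \delta_{\min}>0$ such that $\bigl|(\W^s)_{ki}/\omega_s^{(i)}-1/n\bigr|\le Cq^s$. Unrolling $\z_t^{(i)}$ as a $\W$-weighted history of the injected gradients and subtracting the analogous representation of $\omega_t^{(i)}\overline{\z}_t$ gives
\begin{equation*}
  \x_t^{(i)}-\overline{\z}_t \;=\; -\gamma \sum_{s=0}^{t-1}\sum_{k=1}^n \left(\frac{(\W^{t-s})_{ki}}{\omega_t^{(i)}} - \frac{1}{n}\right)\nabla f_{k,s}\bigl(\x_s^{(k)};\bxi_s^{(k)}\bigr).
\end{equation*}
Cauchy--Schwarz against the geometric weights $Cq^{t-s}/\delta_{\min}$ collapses the inner sum into a geometric series whose total weight is $\tfrac{Cq}{\delta_{\min}(1-q)}$. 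Splitting each $\nabla f$ into its mean (bounded by $G$) and its zero-mean fluctuation (second moment $\le\sigma^2$), using independence across $(k,s)$ for the fluctuation terms so that the $\sigma^2$ part inherits an extra $1/n$ from martingale-style cancellation, and then summing over $t$, yields a bound of the form $\sum_{t,i}\mathbb{E}\|\x_t^{(i)}-\overline{\z}_t\|^2 \le \tfrac{2Cq}{\delta_{\min}(1-q)}\gamma^2 T\,(nG^2+\sigma^2)$.

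Finally, assembling the three pieces---the $\tfrac{nR^2}{2\gamma}$ initial-distance term, the $n\gamma T G^2 + \gamma T\sigma^2$ SGD second-moment term, and the consensus drift scaled by $\tfrac{Cq}{\delta_{\min}(1-q)}$---and collecting $G^2$ and $\sigma^2$ coefficients gives exactly $C_1 = \tfrac{8Cq}{\delta_{\min}(1-q)}+1$ and $C_2 = \tfrac{2Cq}{\delta_{\min}(1-q)}$, yielding the claimed bound. The delicate bookkeeping lies entirely in the push-sum mixing step: correctly handling $\omega_t^{(i)}$ in the denominator of $\x_t^{(i)}$, and separating the adversarial and stochastic contributions so that the $\sigma^2$ term acquires the extra $1/n$ while the $G^2$ term does not.
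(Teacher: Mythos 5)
Your proposal follows essentially the same route as the paper's proof: convexity plus the split of $\x_t^{(i)}-\x^*$ into the consensus drift $\x_t^{(i)}-\overline{\z}_t$ and the centralized term $\overline{\z}_t-\x^*$, telescoping of $\|\overline{\z}_t-\x^*\|^2$ for the latter, Young's inequality to trade the drift against $\|\nabla F_{i,t}\|^2$, and the push-sum mixing lemma (constants $C$, $q$, $\delta_{\min}$) with a geometric-series summation to control $\sum_{t,i}\|\x_t^{(i)}-\overline{\z}_t\|^2$. The only divergence is a constant-level bookkeeping claim (an extra $1/n$ on the $\sigma^2$ part of the consensus error via martingale cancellation, which the paper does not use and which would give $1+C_2$ rather than the stated $1+nC_2$), but this does not affect the validity of the claimed bound.
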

    
\begin{proof}
Since the loss function $f_{i,t}(\cdot)$ is assumed to be convex, which leads to
\begin{align*}
 & \mathbb E_{t}\sum_{i=1}^n f_{i,t}\left(\x_t^{(i)};\bm{\xi}_t^{(i)}\right)   - n F_t(\x^{*})\\
 = &  \mathbb E_{t}\sum_{i=1}^n\left ( f_{i,t}\left(\x_t^{(i)};\bm{\xi}_t^{(i)}\right)   - f_{i,t}\left(\x^{*};\bm{\xi}^{(i)}_t \right) \right)\\
 \leq & \mathbb E_{t}\sum_{i=1}^n\left \langle \nabla f_{i,t}\left(\x_t^{(i)};\bm{\xi}_t^{(i)}\right), \x_t^{(i)} - \x^* \right\rangle \\
 = & \underbrace{\mathbb E_{t}\sum_{i=1}^n\left \langle \nabla f_{i,t}\left(\x_t^{(i)};\bm{\xi}_t^{(i)}\right), \x_t^{(i)} - \overline{\z}_t \right\rangle  }_{:=I_{1t}} +  \underbrace{\mathbb E_{t}\sum_{i=1}^n\left \langle \nabla f_{i,t}\left(\x_t^{(i)};\bm{\xi}_t^{(i)}\right), \overline{\z}_t - \x^{*} \right\rangle }_{:=I_{2t}}.
\end{align*}
For $I_{2t}$, we have
\begin{align*}
& \mathbb E_{t}\sum_{i=1}^n\left \langle \nabla f_{i,t}\left(\x_t^{(i)};\bm{\xi}_t^{(i)}\right), \overline{\z}_t - \x^{*} \right\rangle \\
= & \frac{n}{\gamma} \mathbb E_{t}\left \langle \frac{\gamma}{n}\sum_{i=1}^n\nabla f_{i,t}\left(\x_t^{(i)};\bm{\xi}_t^{(i)}\right), \overline{\z}_t - \x^{*} \right\rangle \\
= & \frac{n}{2\gamma}\mathbb E_{t}\left( \left\|\frac{\gamma}{n}\sum_{i=1}^n\nabla f_{i,t}\left(\x_t^{(i)};\bm{\xi}_t^{(i)}\right) \right\|^2 + \left\|\overline{\z}_t - \x^{*}\right\|^2 - \left\| \overline{\z}_t - \x^{*} -\frac{\gamma}{n}\sum_{i=1}^n\nabla f_{i,t}\left(\x_t^{(i)};\bm{\xi}_t^{(i)}\right) \right\|^2 \right)\\
= & \frac{n}{2\gamma}\mathbb E_{t}\left( \left\|\frac{\gamma}{n}\sum_{i=1}^n\nabla f_{i,t}\left(\x_t^{(i)};\bm{\xi}_t^{(i)}\right) \right\|^2 + \left\|\overline{\z}_t - \x^{*}\right\|^2 - \left\| \overline{\z}_{t+1} - \x^{*}  \right\|^2 \right)\\
\leq & \frac{n}{2\gamma}\mathbb E_{t}\left( \gamma^2G^2 + \frac{\gamma^2\sigma^2}{n} + \left\|\overline{\z}_t - \x^{*}\right\|^2 - \left\| \overline{\z}_{t+1} - \x^{*}  \right\|^2 \right)
\end{align*}

Notice that for COL, we have $I_{1t} = 0$ because $\x_t^{(i)} = \overline{\z}_t$. So for DOL, in order to bound $I_{1t}$, we need to bound the difference $\left\|\x_t^{(i)} - \overline{\z}_t\right\|$ (using Lemma~\ref{lem:mat}).
\begin{align*}
&\mathbb E_{t}\sum_{i=1}^n\left \langle \nabla f_{i,t}\left(\x_t^{(i)};\bm{\xi}_t^{(i)}\right), \x_t^{(i)} - \overline{\z}_t \right\rangle\\
=& \mathbb E_{t}\sum_{i=1}^n\left \langle \nabla F_{i,t}(\x_t^{(i)}), \x_t^{(i)} - \overline{\z}_t \right\rangle\\
\leq & \mathbb E_{t}\sum_{i=1}^n\left(\alpha \left\|\nabla F_{i,t}\left(\x_t^{(i)}\right) \right\|^2 + \frac{1}{\alpha} \left\|\x_t^{(i)} - \overline{\z}_t \right\|^2 \right).
\end{align*}
Summing up the inequality above from $t=1$ to $t=T$, we get
\begin{align*}
&\sum_{t=1}^T\mathbb E_{t}\sum_{i=1}^n\left \langle \nabla f_{i,t}\left(\x_t^{(i)};\bm{\xi}_t^{(i)}\right), \x_t^{(i)} - \overline{\z}_t \right\rangle\\
 = & \sum_{t=1}^T\mathbb E_{t}\sum_{i=1}^n\left \langle \nabla F_{i,t}\left(\x_t^{(i)}\right), \x_t^{(i)} - \overline{\z}_t \right\rangle\\
\leq & \sum_{t=1}^T\mathbb E_{t}\sum_{i=1}^n\left( \alpha \left\| \nabla F_{i,t}\left(\x_t^{(i)}\right)\right\|^2 + \frac{1}{\alpha}\left\| \x_t^{(i)} - \overline{\z}_t \right\|^2\right)\\
= & \sum_{t=1}^T \left(\alpha\mathbb E_{t} \left\|\nabla F_{t}(\X_t) \right\|^2_F + \frac{1}{\alpha}\mathbb E_{t}\left\|\X_t - \overline{\z}_t \right\|^2_F   \right)\\
\leq & \alpha\sum_{t=1}^T \mathbb E_{t} \left\|\nabla F_{t}\left(\X_t\right) \right\|^2_F + \frac{4\gamma^2C^2q^2}{\alpha\delta_{\min}^2(1-q)^2}\sum_{t=1}^T \mathbb E_{t} \left\| \G_{t} \right\|^2_F\\
\leq &  \alpha\sum_{t=1}^T \mathbb E_{t} \left\|\nabla F_{t}\left(\X_t\right) \right\|^2_F + \frac{4\gamma^2C^2q^2}{\alpha\delta_{\min}^2(1-q)^2}\sum_{t=1}^T \left( \mathbb E_{t} \left\| \nabla F_{t}(\X_t) \right\|^2_F + n\sigma^2 \right).
\end{align*}
Choosing $\alpha = \frac{2\gamma Cq}{\delta_{\min}(1-q)}$, we have
\begin{align*}
\sum_{t=1}^T\mathbb E_{t}\sum_{i=1}^n\left \langle \nabla f_{i,t}\left(\x_t^{(i)};\bm{\xi}_t^{(i)}\right), \x_t^{(i)} - \overline{\z}_t \right\rangle
\leq \frac{8n\gamma CTqG^2}{\delta_{\min}(1-q)} + \frac{2n\gamma Cq\sigma^2T}{\delta_{\min}(1-q)}
\end{align*}

So we have
\begin{align*}
 & \sum_{t=1}^T\mathbb E_{t}\sum_{i=1}^n f_{i,t}\left(\bold{z}_t^{(i)};\bm{\xi}_t^{(i)}\right)   - n F(\x^{*})\\
 \leq & \frac{8n\gamma CTqG^2}{\delta_{\min}(1-q)} + \frac{2\gamma Cq\sigma^2T}{\delta_{\min}(1-q)} + \frac{n}{2n\gamma}\sum_{t=1}^T\left( \gamma^2G^2 + \frac{\gamma^2\sigma^2}{n} + \mathbb E_{t}\left\|\overline{\z}_t - \x^{*}\right\|^2 - \mathbb E_{t}\left\| \overline{\z}_{t+1} - \x^{*}  \right\|^2 \right)\\
 \leq  & G^2Tn\gamma\left( \frac{8 Cq}{\delta_{\min}(1-q)} + 1 \right) + \sigma^2T\gamma\left(1 + \frac{2n Cq}{\delta_{\min}(1-q)}   \right) + \frac{n}{2\gamma}\sum_{t=1}^T\left(  \mathbb E_{t}\left\|\overline{\z}_t - \x^{*}\right\|^2 - \mathbb E_{t}\left\| \overline{\z}_{t+1} - \x^{*}  \right\|^2 \right)\\
 \leq &G^2Tn\gamma\left( \frac{8 Cq}{\delta_{\min}(1-q)} + 1 \right) + \sigma^2T\gamma\left(1 + \frac{2n Cq}{\delta_{\min}(1-q)}   \right) + \frac{nR^2}{2\gamma}\\
 = & C_1nG^2T\gamma + (1 + nC_2)\sigma^2T\gamma + \frac{nR^2}{2\gamma}.
\end{align*}

Notice that Theorem~\ref{theory:corollary} can be easily verified by setting $\gamma=\frac{\sqrt{n}R}{\sqrt{(1 + nC_2)\sigma^2}+\sqrt{n C_1 G^2T}}$.
\end{proof}

Next, we will present two lemmas for our proof of Lemma \ref{lem:mat}. The proofs of following two lemmas can be found in existing literature \cite{nedic2014distributed,nedic2016stochastic,assran2018asynchronous,assran2018stochastic}.
\begin{lemma} \label{lem:prod_lower}
Under the Assumption \ref{assump}, there exists a constant $\delta_{\min} > 0$ such that for any $t$, the following holds
\begin{align}
\sum_{j=1}^n [ \W^{t}{}^\top \W^{t}{}^\top ...\W^{0}{}^\top ]_{ij} \geq \delta_{\min}\geq \frac{1}{n^n},\ \forall i
\end{align}
where $\W^t$ is a row stochastic matrix.
\end{lemma}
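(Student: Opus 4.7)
The plan is to leverage three standard ingredients: transposition converts row stochasticity into column stochasticity, strong connectivity of $\mathcal{G}$ forces positivity of long enough matrix products, and a uniform floor on the positive entries of each $\W^s$ yields an explicit quantitative bound. Writing $M_t := (\W^t)^\top (\W^{t-1})^\top \cdots (\W^0)^\top$, each $(\W^s)^\top$ is column stochastic, so the product $M_t$ is column stochastic and $\sum_i (M_t)_{ij} = 1$ for every $j$. Defining the row sums $v_i^t := \sum_j (M_t)_{ij}$, one immediately gets the global normalization $\sum_i v_i^t = n$ together with the recursion $v^{t+1} = (\W^{t+1})^\top v^t$ starting from $v^0 = (\W^0)^\top \mathbf{1}$. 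The entire task therefore reduces to a uniform-in-$t$ lower bound on every coordinate of $v^t$.

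The combinatorial step is to observe that $(M_t)_{ij} > 0$ precisely when $\mathcal{G}$ contains a directed walk of length $t+1$ from $j$ to $i$ consistent with the weight sequence $\W^0,\dots,\W^t$. Strong connectivity guarantees that, for any target node $i$, there exists a source $j$ admitting such a walk of length at most $n-1$; for $t+1 \geq n-1$ this positivity is preserved by appending self-loops, under the lazy-update convention $W_{ii} > 0$ that is standard in this literature and consistent with the OPS algorithm (where $i \in \mathcal{N}_i^{\text{in}}$). Each realized walk contributes at least $\eta^{t+1}$, where $\eta$ is the smallest positive entry occurring in any $\W^s$. The quantitative tail $\delta_{\min} \geq 1/n^n$ then emerges by invoking the natural structural choice $W_{ij} \geq 1/n$ whenever $(i,j) \in E$, i.e., uniform weighting over at most $n$ out-neighbors, so that $\eta \geq 1/n$ and any walk of length at most $n$ contributes at least $1/n^n$.

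The main obstacle is upgrading a single-entry positivity bound into a row-sum bound that does \emph{not} degrade with $t$, since the naive estimate $\eta^{t+1}$ shrinks to zero as the horizon grows. The standard remedy is a block contraction of the form $\min_i v_i^{t+B} \geq \rho \min_i v_i^t$ for some block length $B \leq n$ and contraction factor $\rho > 0$ depending only on $\eta$ and the connectivity diameter; iterating this inequality from the initial vector $v^0$, which already has at least one strictly positive coordinate by column stochasticity, pins $\min_i v_i^t$ above a positive constant for all $t$. I would follow the template developed by Nedi\'c and Olshevsky for ergodic products of stochastic matrices, which the excerpt explicitly cites, and the quantitative $1/n^n$ floor would drop out of tracking the constants through the block-contraction step.
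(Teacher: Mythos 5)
First, a remark on the comparison itself: the paper does not actually prove this lemma --- it is stated and then attributed to the existing push-sum literature (Nedi\'c--Olshevsky and follow-ups) --- so your attempt has to be judged against the standard argument from those references rather than against anything written in the appendix. Your setup matches that standard argument: passing to the column-stochastic transposes, defining the row sums $v_i^t=\sum_j (M_t)_{ij}$ with the conservation law $\sum_i v_i^t=n$, and recognizing that the explicit constant must come from strong connectivity together with a floor $\eta$ on the positive entries (with $W_{ii}>0$ and $\eta\ge 1/n$ needed for the $1/n^n$ figure; the paper never states these hypotheses either, they are implicit in the citations, so I do not count that against you).

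The genuine gap is in your final step. A block inequality $\min_i v_i^{t+B}\ge \rho\,\min_i v_i^t$ with a factor $\rho\in(0,1)$, iterated from $v^0$, yields $\min_i v_i^{kB}\ge \rho^k\min_i v_i^0$, which decays geometrically and therefore does not pin $\min_i v_i^t$ above any positive constant uniformly in $t$. Moreover, no min-to-min recursion of this kind can be repaired: the minimum of $v^t$ genuinely decreases in general (it converges to $n\pi_i$ where $\pi$ is the Perron vector of $\W$), so the only thing preventing collapse is the conserved total mass, which your recursion never invokes. The correct remedy --- using exactly the conservation law you wrote down in your first paragraph and then abandoned --- is a one-shot bound rather than an iteration: for $t+1\ge n$ factor the product as $M_t=(\W^\top)^n M_{t-n}$ (or its time-varying analogue), note that strong connectivity plus positive diagonals force every entry of $(\W^\top)^n$ to be at least $\eta^n$, and conclude
\[
v_i^t=\sum_k\bigl[(\W^\top)^n\bigr]_{ik}\,v_k^{t-n}\;\ge\;\eta^n\sum_k v_k^{t-n}\;=\;n\,\eta^n\;\ge\;n^{1-n}\;\ge\;n^{-n},
\]
where the lower bound comes from the total mass $n$ of $v^{t-n}$, not from its minimum. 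For $t+1<n$ one uses the self-loops directly: $v_i^t\ge[M_t]_{ii}\ge\eta^{t+1}\ge\eta^n$. With this single replacement your outline becomes the standard (and correct) proof.
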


\begin{lemma} \label{lem:concentr} Under the Assumption \ref{assump}, for any $t$, there always exists a stochastic vector $\psi(t)$ and two constants $C=4$ and $q= 1-n^{-n} < 1$ such that for any $s$ satisfying  $s\leq t$, the following inequality holds 
\begin{align*}
\left|[\W^{t}{}^\top \W^{t}{}^\top \cdots \W^{s+1}{}^\top \W^{s}{}^\top]_{ij} - \psi_i(t) \right| \leq C q^{t-s}, \forall i,j
\end{align*}
where $\W^{t}$ is a row stochastic matrix, and $\psi(t)$ is a vector with $\psi_i(t)$ being its $i$-th entry.
\end{lemma}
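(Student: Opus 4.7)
The statement is the classical weak-ergodicity bound for backward products of stochastic matrices, so I would proceed through the Dobrushin ergodicity coefficient. First, because each $\W^s$ is row stochastic, each transpose $\W^s{}^\top$ is column stochastic, and hence the backward product $P_{s,t} := \W^{t}{}^\top \W^{t-1}{}^\top \cdots \W^{s}{}^\top$ is column stochastic. The target then becomes showing that all columns of $P_{s,t}$ converge to a common stochastic vector $\psi(t)$ at geometric rate in $t-s$; the entry-wise bound in the lemma follows immediately.

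The key technical device is the coefficient $\tau(P) := \tfrac{1}{2}\max_{j,k}\|P_{:,j} - P_{:,k}\|_1$ for column stochastic $P$. I would use two standard facts: (a) \emph{submultiplicativity}, $\tau(PQ)\leq\tau(P)\tau(Q)$ for column stochastic $P,Q$; and (b) \emph{scrambling bound}, if every pair of columns of $P$ shares a common row with entry at least $\eta>0$, then $\tau(P)\leq 1-\eta$. Using strong connectivity of $\mathcal G$ together with Lemma~\ref{lem:prod_lower}, I would argue that any block of $n$ consecutive transposed multiplications satisfies such a scrambling condition with $\eta\geq\delta_{\min}\geq n^{-n}$, giving $\tau(P_{s,s+n-1})\leq 1-\delta_{\min}$. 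Chaining $\lfloor(t-s)/n\rfloor$ blocks via submultiplicativity then yields $\tau(P_{s,t}) \leq C' q^{t-s}$ with $q = 1 - n^{-n}$ and some constant $C'$ depending only on $n$.

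Finally, I would define $\psi(t) := \lim_{s\to -\infty} P_{s,t} e_j$. The limit exists and is independent of $j$ because submultiplicativity makes the sequence Cauchy and $\tau(P_{s,t})\to 0$ forces all columns to coincide in the limit; column stochasticity is preserved in the limit, so $\psi(t)$ is a stochastic vector. Since every column of $P_{s,t}$ then lies within $\ell_1$ distance $2\tau(P_{s,t})$ of $\psi(t)$, each entry satisfies $|[P_{s,t}]_{ij}-\psi_i(t)|\leq 2\tau(P_{s,t})\leq C q^{t-s}$ with $C$ absorbing $2C'$, matching the claimed $C=4$, $q=1-n^{-n}$.

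The main obstacle is the middle step: converting the row-sum lower bound of Lemma~\ref{lem:prod_lower} into a genuine scrambling/positivity statement on a fixed-length block of the backward product. This is where strong connectivity of $\mathcal G$ must enter quantitatively, and it is what pins down the specific constants $C$ and $q$ in the statement (the factor $n^{-n}$ in particular reflects the worst-case graph diameter and the $n$-step block needed to guarantee positivity). The remaining pieces — submultiplicativity of $\tau$, geometric chaining, and conversion of the coefficient bound into an entry-wise error — are routine once that positivity has been extracted.
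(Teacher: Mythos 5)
The paper itself does not prove this lemma; it simply cites the push-sum literature (Ned\'i\'c--Olshevsky and follow-ups), and your sketch via the coefficient of ergodicity is exactly the standard route taken there, so in spirit you are reconstructing the ``official'' proof. However, the step you yourself flag as the main obstacle is a genuine gap, and it cannot be closed from the hypotheses you invoke. Strong connectivity of $\mathcal G$ plus row stochasticity of $\W$ does \emph{not} imply that a block of $n$ consecutive factors is scrambling. Take $n=2$ and $\W=\left(\begin{smallmatrix}0&1\\1&0\end{smallmatrix}\right)$: the graph is strongly connected and $\W$ is row (indeed doubly) stochastic, yet every product of copies of $\W^\top$ is a permutation matrix, $\tau\equiv 1$, the columns never contract, and no $\psi(t)$ with a geometric bound exists. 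What is missing is an aperiodicity condition, in practice $W_{ii}>0$ for all $i$ (each node retains a positive weight on itself); with self-loops, an induction on path lengths shows that after $n-1$ multiplications every entry reachable along a directed path is positive and the diagonal stays positive, which is what actually yields the scrambling constant $\eta\geq n^{-n}$. The cited references assume positive diagonals explicitly; the paper's Assumption 1 does not state it, so your proof inherits (and makes visible) a gap in the paper's own hypotheses. You should add the self-loop assumption and prove the block-positivity claim rather than asserting it.

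Two smaller points. First, your chaining gives $\tau(P_{s,t})\leq (1-\delta_{\min})^{\lfloor (t-s+1)/n\rfloor}$, which yields a geometric rate $q=(1-\delta_{\min})^{1/n}$, not $q=1-n^{-n}$ itself; the $1/n$-th root is how the rate appears in the literature, so your claim that the chaining ``yields $q=1-n^{-n}$'' overstates what the argument produces (the lemma's stated constant is the tighter one and does not follow from block chaining alone). Second, $\psi(t):=\lim_{s\to-\infty}P_{s,t}e_j$ is not well defined when the matrix sequence starts at $s=0$; the clean fix is to take $\psi(t)$ to be any fixed column of the full product $P_{0,t}$ (or its column average), note that appending column-stochastic factors on the right only takes convex combinations of existing columns, and conclude $\lvert [P_{s,t}]_{ij}-\psi_i(t)\rvert\leq 2\tau(P_{s,t})$ for all $s\leq t$. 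With those repairs the argument is the standard and correct one.
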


\begin{lemma}\label{lemma:seq}
Given two non-negative sequences $\{a_t\}_{t=1}^{\infty}$ and $\{b_t\}_{t=1}^{\infty}$ that satisfying
\begin{equation}
a_t =  \sum_{s=1}^t\rho^{t-s}b_{s}, \numberthis \label{eqn1}
\end{equation}
with $\rho\in[0,1)$, we have
\begin{align*}
D_k:=\sum_{t=1}^{k}a_t^2 \leq & 
\frac{1}{(1-\rho)^2} \sum_{s=1}^kb_s^2.
\end{align*}
\end{lemma}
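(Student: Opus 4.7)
The plan is to recognize that $a_t$ is a discrete convolution of the sequence $\{b_s\}$ with the geometric kernel $\rho^n$, and then invoke (or reprove by hand) the discrete Young's inequality $\|g*b\|_{\ell^2}\leq \|g\|_{\ell^1}\|b\|_{\ell^2}$. Since $\sum_{n\geq 0}\rho^n = 1/(1-\rho)$, this directly yields the factor $1/(1-\rho)^2$ claimed in the lemma. The whole argument is a two-step Cauchy--Schwarz plus summation-swap, with no real obstacle; I will write it out by hand rather than citing convolution inequalities, to keep the proof self-contained.

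\textbf{Step 1 (Cauchy--Schwarz on each $a_t$).} I would split the kernel as $\rho^{t-s} = \rho^{(t-s)/2}\cdot\rho^{(t-s)/2}$ and apply Cauchy--Schwarz to the defining sum \eqref{eqn1}:
\begin{equation*}
a_t^2 \;=\;\Bigl(\sum_{s=1}^{t}\rho^{(t-s)/2}\cdot\rho^{(t-s)/2}\,b_s\Bigr)^{\!2}
\;\leq\;\Bigl(\sum_{s=1}^{t}\rho^{t-s}\Bigr)\Bigl(\sum_{s=1}^{t}\rho^{t-s}b_s^{2}\Bigr)
\;\leq\;\frac{1}{1-\rho}\sum_{s=1}^{t}\rho^{t-s}b_s^{2},
\end{equation*}
where the last inequality bounds the geometric sum $\sum_{s=1}^t\rho^{t-s}\leq\sum_{j=0}^{\infty}\rho^{j}=1/(1-\rho)$.

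\textbf{Step 2 (sum in $t$ and swap).} Next I would sum over $t=1,\ldots,k$ and interchange the order of summation:
\begin{equation*}
\sum_{t=1}^{k}a_t^{2}\;\leq\;\frac{1}{1-\rho}\sum_{t=1}^{k}\sum_{s=1}^{t}\rho^{t-s}b_s^{2}
\;=\;\frac{1}{1-\rho}\sum_{s=1}^{k}b_s^{2}\sum_{t=s}^{k}\rho^{t-s}
\;\leq\;\frac{1}{(1-\rho)^{2}}\sum_{s=1}^{k}b_s^{2},
\end{equation*}
using the same geometric bound on the inner sum. This is precisely the stated conclusion $D_k\leq (1-\rho)^{-2}\sum_{s=1}^{k}b_s^{2}$, completing the proof.

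There is essentially no hard step; the only thing to be careful about is that both invocations of $\sum\rho^{j}\leq 1/(1-\rho)$ must be uniform in $t$ (respectively $s$), which is why I extend the finite sums to the infinite geometric series. Since the sequences are non-negative, no absolute-value considerations are needed, and the inequality holds term-by-term. I would state the result for all finite $k$ as written, noting that letting $k\to\infty$ recovers the full $\ell^{2}$ version if desired.
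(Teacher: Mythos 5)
Your proof is correct, and it is essentially the same argument as the paper's: the paper expands $a_t^2$ as a double sum and applies $b_sb_r\leq(b_s^2+b_r^2)/2$, which is exactly the Cauchy--Schwarz step you perform, and both proofs then finish with the same geometric-series bound and summation swap. No gaps.
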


\begin{proof}
From the definition, we have
\begin{align*}
S_k= & \sum_{t=1}^{k}\sum_{s=1}^t\rho^{t-s}b_{s}
=  \sum_{s=1}^{k}\sum_{t=s}^k\rho^{t-s}b_{s}
=  \sum_{s=1}^{k}\sum_{t=0}^{k-s}\rho^{t}b_{s}
\leq  \sum_{s=1}^{k}{b_{s}\over 1-\rho}, \numberthis \label{eqn3}\\
D_k=  & \sum_{t=1}^{k}\sum_{s=1}^t\rho^{t-s}b_{s}\sum_{r=1}^t\rho^{t-r}b_{r}\\
= & \sum_{t=1}^{k}\sum_{s=1}^t\sum_{r=1}^t\rho^{2t-s-r}b_{s}b_{r} \\
\leq &  \sum_{t=1}^{k}\sum_{s=1}^t\sum_{r=1}^t\rho^{2t-s-r}{b_{s}^2+b_{r}^2\over2}\\
= & \sum_{t=1}^{k}\sum_{s=1}^t\sum_{r=1}^t\rho^{2t-s-r}b_{s}^2 \\
\leq  & {1\over 1-\rho}\sum_{t=1}^{k}\sum_{s=1}^t\rho^{t-s}b_{s}^2\\
\leq & {1\over (1-\rho)^2}\sum_{s=1}^{k}b_{s}^2. 
\end{align*}
\end{proof}

Based on the above three lemmas, we can obtain the following lemma.
\begin{lemma} \label{lem:mat} Under the Assumption \ref{assump}, the updating rule of Algorithm \ref{alg} leads to the following inequality
\begin{align*}
\sum_i^{n}\sum_{t=0}^T \left\|  \x^{(i)}_{t+1} - \overline{\z}_{t+1} \right \|_2^2 \leq & \frac{4\gamma^2 C^2q^2}{\delta_{\min}^2(1-q)^2}\sum_{s=0}^t  \|\G_{s}\|_F^2,
\end{align*}
where $\gamma$ is the step size, and $C=4, \delta_{\min} \geq n^{-n}, q= 1-n^{-n} $ are constants. $\G_{s}$ is the matrix for the stochastic gradient at time $s$ (e.g., the $i$-th column is the stochastic gradient vector on node $i$ at time $s$).
\end{lemma}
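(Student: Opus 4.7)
The plan is to unroll the linear recursion in matrix form, expose $\x_{t+1}^{(i)} - \overline{\z}_{t+1}$ as a time-weighted sum of past stochastic gradients, quantify the weights using Lemmas~\ref{lem:prod_lower} and~\ref{lem:concentr}, and finally convert the resulting discrete convolution into the target $\ell^2$ bound via Lemma~\ref{lemma:seq}.

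First I would stack the iterates as columns of a $d\times n$ matrix $\Z_t$. The local update $\z_{t+\frac12}^{(i)} = \z_t^{(i)} - \gamma\nabla f_{i,t}(\x_t^{(i)};\bxi_t^{(i)})$ followed by the push-sum step $\z_{t+1}^{(i)} = \sum_{k} W_{ki}\z_{t+\frac12}^{(k)}$ becomes the matrix recursion $\Z_{t+1} = (\Z_t - \gamma\G_t)\W$. Since $\Z_0 = \mathbf{0}$, unrolling yields $\Z_{t+1} = -\gamma\sum_{s=0}^t \G_s \W^{t+1-s}$ and, analogously, $\omega_{t+1}^{(i)} = \sum_j (\W^{t+1})_{ji}$. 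Row stochasticity of $\W$ gives $\W^k\mathbf{1} = \mathbf{1}$, hence $\overline{\z}_{t+1} = \tfrac{1}{n}\Z_{t+1}\mathbf{1} = -\tfrac{\gamma}{n}\sum_{s=0}^t \G_s\mathbf{1}$, which matches the recursion $\overline{\z}_{t+1} = \overline{\z}_t - (\gamma/n)\sum_i\nabla f_{i,t}(\x_t^{(i)};\bxi_t^{(i)})$ used in the regret analysis.

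Next I would use $\x_{t+1}^{(i)} = \z_{t+1}^{(i)}/\omega_{t+1}^{(i)}$ to write
\[
\x_{t+1}^{(i)} - \overline{\z}_{t+1} = -\gamma\sum_{s=0}^t\sum_j (\G_s)_{:,j}\left[\frac{(\W^{t+1-s})_{ji}}{\omega_{t+1}^{(i)}} - \frac{1}{n}\right].
\]
The bracketed scalar measures the bias between pushing mass through $t+1-s$ steps of a row-stochastic $\W$ and ideal uniform averaging. To bound it, I would apply Lemma~\ref{lem:concentr} twice, obtaining a stochastic vector $\psi$ satisfying $|(\W^{t+1-s})_{ji} - \psi_i|\le Cq^{t-s}$ and, taking the full product, $|(\W^{t+1})_{ki} - \psi_i|\le Cq^{t}$. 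Rewriting the bracket as $[n(\W^{t+1-s})_{ji} - \omega_{t+1}^{(i)}]/[n\omega_{t+1}^{(i)}]$ and splitting the numerator as $n\bigl[(\W^{t+1-s})_{ji} - \psi_i\bigr] - \sum_k \bigl[(\W^{t+1})_{ki} - \psi_i\bigr]$ lets both concentration bounds act; combined with the lower bound $\omega_{t+1}^{(i)} \ge \delta_{\min}$ from Lemma~\ref{lem:prod_lower}, the bracket is controlled in absolute value by $2Cq^{t-s}/\delta_{\min}$.

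Finally I would plug this coefficient bound back in, estimate $\bigl\|\sum_j (\G_s)_{:,j} w_j\bigr\| \le \|\G_s\|_F\|w\|$ with the coefficient vector $w$ of norm at most $2C\sqrt{n}\,q^{t-s}/\delta_{\min}$, and collect the result as $\|\x_{t+1}^{(i)} - \overline{\z}_{t+1}\| \le \tfrac{2\gamma C\sqrt{n}}{\delta_{\min}}\sum_{s=0}^t q^{t-s}\|\G_s\|_F$. Squaring, summing over $t$, and applying Lemma~\ref{lemma:seq} with $\rho = q$ and $b_s \propto \|\G_s\|_F$ turns the convolution into $(1-q)^{-2}\sum_s\|\G_s\|_F^2$; a final sum over $i\in[n]$ delivers the stated bound (up to the exact stated constant). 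The main obstacle is the algebraic step that lets both consequences of Lemma~\ref{lem:concentr} act on a single numerator so that the exponent $q^{t-s}$, rather than a uniform $q^0$, is retained; once this bookkeeping is done, the remainder is direct.
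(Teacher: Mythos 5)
Your proposal is correct and follows essentially the same route as the paper's proof: unroll $\Z_{t+1}=(\Z_t-\gamma\G_t)\W$ and $\omega_{t+1}=\W^{t+1}{}^\top\omega_0$, split the deviation using the stochastic vector $\psi$ from Lemma~\ref{lem:concentr} so that both the $\W^{t+1-s}$ term and the $\omega_{t+1}$ term decay geometrically in $t-s$, lower-bound $\omega_{t+1}^{(i)}$ via Lemma~\ref{lem:prod_lower}, and convert the resulting convolution with Lemma~\ref{lemma:seq}; your entrywise bracket decomposition is exactly the paper's matrix identity $n\G_s[\W^{t-s+1}-\mathbf{1}\psi^\top]-\G_s\mathbf{1}\mathbf{1}^\top[\W^{t+1}-\mathbf{1}\psi^\top]$ written column by column. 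The only discrepancies are in constant bookkeeping (powers of $n$ and $q$), which you flag and which the paper itself is not entirely consistent about.
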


\begin{proof}

The updating rule of OPS can be formulated as
\begin{align*}
&\Z_{t+1} = \left ( \Z_{t} - \gamma \G_{t} \right ) \W\\
&\omega_{t+1} =  \W{}^\top \omega_{t} \\
&\X_{t+1} = \Z_{t+1} [\mathrm{diag} (\omega_{t+1})]^{-1}
\end{align*}
where $\W$ is a row stochastic matrix. $\X_{t} = [\x^{(1)}_{t},\x^{(2)}_{t},...,\x^{(n)}_{t}]$ is a matrix whose each column is $\x^{(i)}_{t}$. $G_{t}$ is the matrix of gradient, whose each column is the stochastic gradient at $\z^{(i)}_{t}$ on node $i$. $\Z_{t} = [\z^{(1)}_{t},...,\z^{(n)}_{t}]$ is the matrix whose each column is $\z^{(i)}_{t}$.

Assuming $X_{0} = O$ and $\omega_{0} = 1$, then we have
\begin{align}
& \Z_{t+1} = \left ( \Z_{t} - \gamma \G_{t} \right ) \W = ... = - \gamma \sum_{s = 0}^{t}\G_{s} \W^{t-s+1} ,\label{eq:reform1}\\
& \overline{\z}_{t+1} = \overline{\z}_{t} - \gamma \overline{\mathbf{g}}_{t} = ... = - \sum_{s = 0}^{t} \gamma \overline{\mathbf{g}}_{s}, \label{eq:reform2} \\
&\omega_{t+1} = \W^{t+1}{}^\top \omega_{0},\label{eq:reform3} 
\end{align}
where $\overline{\x}_{t} = \X_{t} 1$ is the average of all variables on the $n$ nodes, and $\overline{\mathbf{g}}_{t} = \G_{t} 1$ is the averaged gradient. We have $\W 1 = 1$ since $\W$ is a row stochastic matrix.

For $\omega_{t+1}$,  according to Lemma \ref{lem:concentr}, we decompose it as follows
\begin{align}
\omega_{t+1} =& \W^{t+1}{}^\top \omega_{0} = [ \W^{t+1}{}^\top   - \psi(t) 1^\top ]\omega_{0} +  \psi(t) 1^\top \omega_{0} 
= [\W^{t+1}{}^\top - \psi(t) 1^\top ]1 + n \psi(t), \label{eq:concentr}
\end{align}
since $\omega_{0} = 1$. 

On the other hand, according to Lemma \ref{lem:prod_lower}, we also have
\begin{align}
\omega_{t+1}^{(i)} = [ \W^{t+1}{}^\top 1]^\top \e_i = 
\sum_{j=1}^n [\W^{t+1}{}^\top ]_{ij}  \geq n\delta_{\min},\label{eq:lower}
\end{align}
where $\e_i$ is a vector with only the $i$-th entry being $1$ and $0$ for others.

We need to further bound the following term
\begin{align*}
\left\|  \x^{(i)}_{t+1} - \overline{\z}_{t+1} \right \| =& \gamma \left \| \frac{\z_{t+1}^{(i)}} {\omega_{t+1}^{(i)}} - \overline{\z}_{t+1}  \right \|  \\
=&  \gamma \left \| \sum_{s = 0}^{t}\left ( \frac{\G_{s} \W^{t-s+1}  \e_i } {1^\top  \W^{t+1}\e_i} - \frac{\G_{s} 1}{n} \right ) \right \| \\
=&  \gamma \left \| \sum_{s = 0}^{t} \frac{n\G_{s} \W^{t-s+1}  \e_i - \G_{s} 1 1^\top  \W^{t+1} \e_i} {n\omega_{t+1}^{(i)}}\right \|,
\end{align*}
 where the second equality is by \eqref{eq:reform1}, \eqref{eq:reform2}, and \eqref{eq:reform3}. We turn to bound the following term
\begin{align*}
&\left \| \sum_{s = 0}^{t} \frac{n\G_{s} \W^{t-s+1}  \e_i - \G_{s} 1 1^\top  \W^{t+1} \e_i} {n\omega_{t+1}^{(i)}}\right \| \\
\leq & \frac{1}{n^2 \delta_{\min}} \left \| \sum_{s = 0}^{t} \left( n\G_{s} \W^{t-s+1} \e_i - \G_{s} 1 1^\top \W^{t+1} \e_i \right) \right \|,
\end{align*}
where the first inequality is accordng to \eqref{eq:lower}. Therefore, combining the results above, we can have 
\begin{align*}
\sum_{i=1}^n \left\|  \x^{(i)}_{t+1} - \overline{\z}_{t+1} \right \|^2_2 &\leq \frac{\gamma^2}{n^4 \delta_{\min}^2} \sum_{i=1}^n  \left \| \sum_{s = 0}^{t} \left( n\G_{s} \W^{t-s+1}  \e_i - \G_{s} 1 1^\top  \W^{t+1} \e_i \right) \right \|_2^2 \\
&\leq \frac{\gamma^2}{n^4 \delta_{\min}^2} \left \| \sum_{s = 0}^{t} \left( n\G_{s} \W^{t-s+1}  - \G_{s} 1 1^\top  \W^{t+1} \right) \right \|_F^2
\end{align*}
where the second inequality is due to $\sum_{i=1}^n \|\mathbf{A} \e_i\|_2^2 = \|\mathbf{A}\|_F^2$.

It remains to bound the following term
\begin{align*}
&\left \| \sum_{s = 0}^{t} \left( n\G_{s} \W^{t-s+1} - \G_{s} 1 1^\top  \W^{t+1} \right) \right \|_F^2 \\
=& \left \| \sum_{s = 0}^{t} \left(
n\G_{s} \W^{t-s+1} - \G_{s} 1 [1^\top(\W^{t+1} - \psi(t) 1^\top )^\top + n \psi(t)^\top] \right) \right \|_F^2 \\
=& \left \| \sum_{s = 0}^{t} \left(
n \G_{s} [\W^{t-s+1} -  1 \psi(t)^\top]  - \G_{s} 1 1^\top [\W^{t+1} - 1 \psi(t){}^\top  ]  \right) \right \|_F^2 \\
\leq& \left ( \sum_{s = 0}^{t} \left \|  
n \G_{s} [\W^{t-s+1} -  1 \psi(t)^\top] \right \|_F + \sum_{s = 0}^{t} \left \|  \G_{s} 1 1^\top [\W^{t+1} - 1 \psi(t){}^\top  ]  \right \|_F \right)^2 \\
\leq& \left ( n \sum_{s = 0}^{t} \left \| \G_{s}\|_F \|[\W^{t-s+1} -  1 \psi(t)^\top] \right \|_F + \sum_{s = 0}^{t} \left \|  \G_{s} \|_F \|1 1^\top\|_F \|[\W^{t+1} - 1 \psi(t){}^\top  ]  \right \|_F \right)^2 \\
\leq& n^2 \left (  \sum_{s = 0}^{t} \left \| \G_{s}\|_F \|[\W^{t-s+1} -  1 \psi(t)^\top] \right \|_F + \sum_{s = 0}^{t} \left \|  \G_{s} \|_F  \|[\W^{t+1} - 1 \psi(t){}^\top  ]  \right \|_F \right)^2 \\
\leq& n^2 \left ( \sum_{s = 0}^{t} n C q^{t-s+1} \|  \G_{s} \|_F + \sum_{s = 0}^{t}  n C q^{t+1} \|  \G_{s} \|_F   \right)^2 \\\\
\leq& 4 n^4 C^2q^2 \left ( \sum_{s = 0}^{t}  q^{t-s} \|  \G_{s} \|_F   \right)^2 
\end{align*}
where the third inequality is due to $\|1 1^\top\|_F = n$ and the fourth inequality is by Lemma \ref{lem:concentr} and the fact that $\|\mathbf{A}\|_F \leq n \cdot \max_{i,j} |A_{ij}|$ if $\mathbf{A}\in \mathbb{R}^{n\times n}$.

Therefore, if we combining all the above inequalities together, we can obtain
\begin{align*}
\sum_{i=1}^n \left\|  \x^{(i)}_{t+1} - \overline{\z}_{t+1} \right \|_2^2 \leq \frac{4\gamma^2 C^2q^2 }{\delta^2_{\min}} \left ( \sum_{s = 0}^{t}  q^{t-s} \|  \G_{s} \|_F   \right)^2.
\end{align*}
Using Lemma~\ref{lemma:seq}, we have
\begin{align*}
\sum_{t=0}^T\left ( \sum_{s = 0}^{t}  q^{t-s} \|  \G_{s} \|_F   \right)^2 \leq \frac{1}{(1 - q)^2}\sum_{t=0}^T \|  \G_{t} \|_F^2,
\end{align*}
which leads to
\begin{align*}
\sum_{t=0}^T \sum_{i=1}^n \left\|  \x^{(i)}_{t+1} - \overline{\z}_{t+1} \right \|_2^2 \leq \frac{4\gamma^2 C^2q^2  }{\delta^2_{\min}(1-q)^2} \sum_{t=0}^T \|  \G_{t} \|_F^2,
\end{align*}
which completes the proof.
\end{proof}

Actually, Theorem~\ref{theory:corollary2} is a corollary of Lemma~\ref{lem:mat} by setting $\gamma$ as the appropriate value.

\section{Extra Experiment Results}

\begin{figure*}[t]
    \centering
       \begin{subfigure}[b]{0.4\textwidth}
          \centering
         \includegraphics[width=0.8\textwidth]{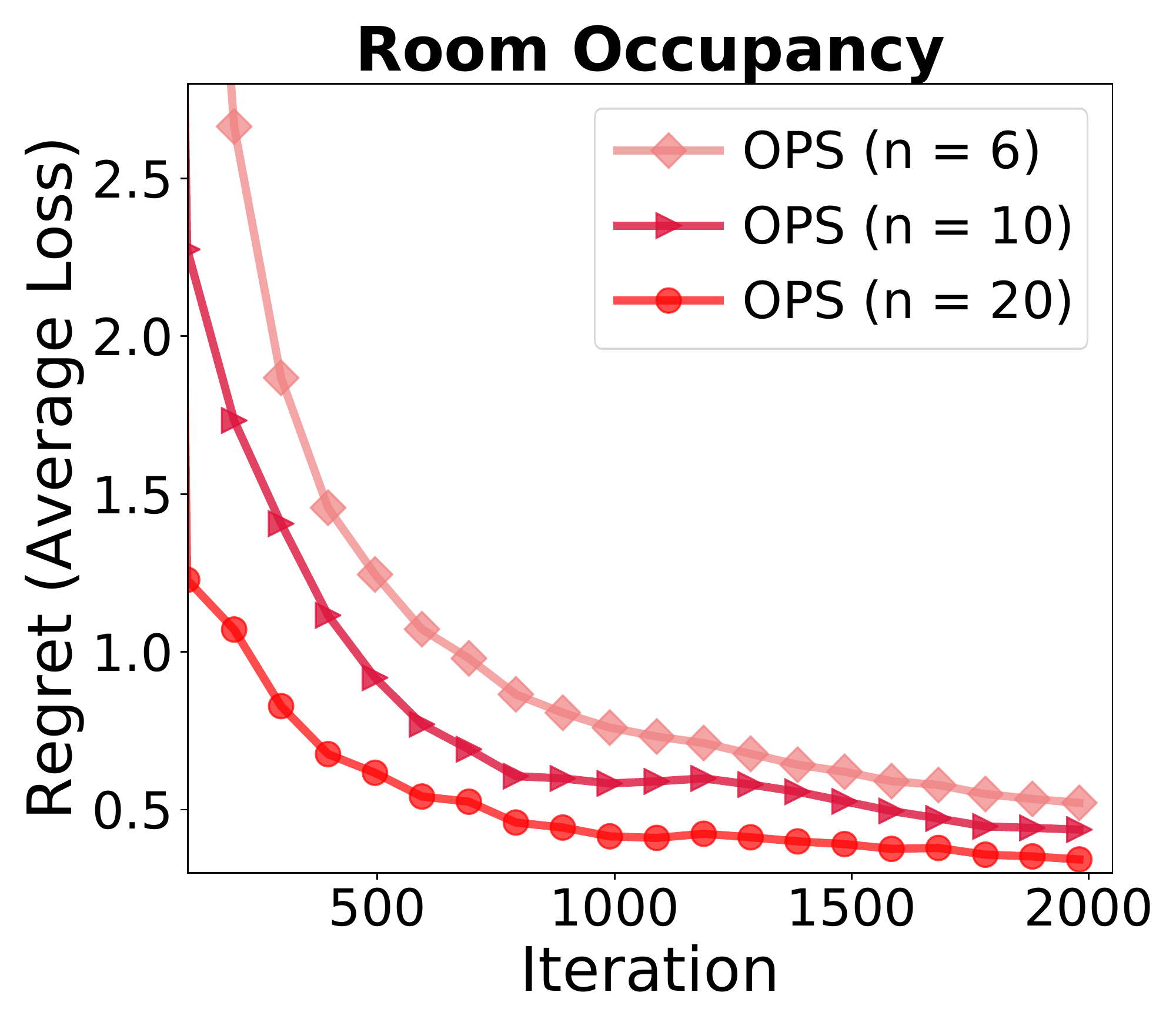}
         \caption{stochastic=100\%\\ \hspace{0.5cm}  {\hspace{0.5cm}\#Neighbors=2}}
         \label{fig:2-Room-beta100}
     \end{subfigure}
     \begin{subfigure}[b]{0.4\textwidth}
          \centering
         \includegraphics[width=0.8\textwidth]{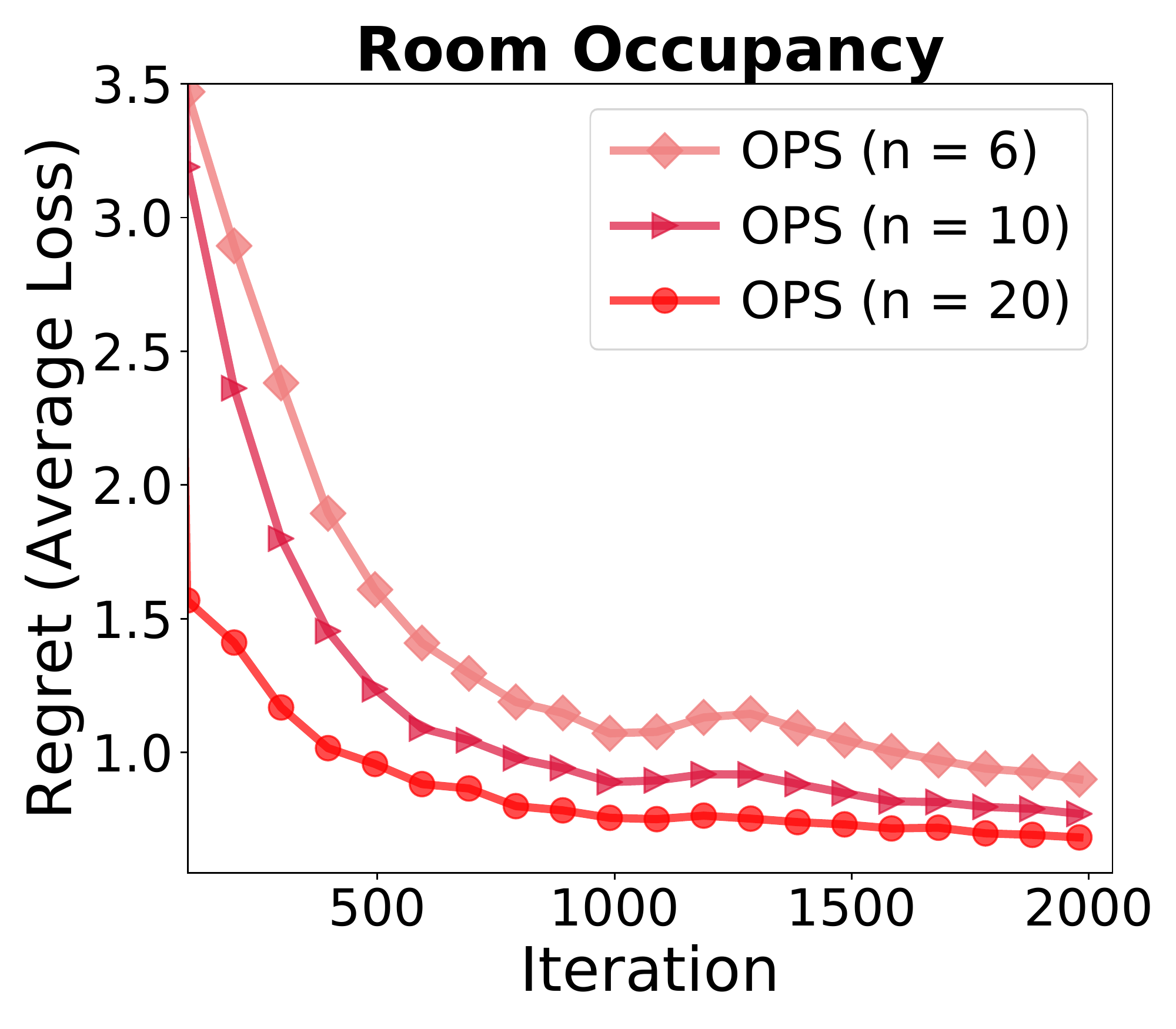}
         \caption{stochastic=50\%\\ \hspace{0.5cm}  {\hspace{0.5cm}\#Neighbors=2}}
         \label{fig:2-Room-beta50}
     \end{subfigure}
       \caption{Evaluation on the Network Sizes}
       \label{fig:sizes_2}
\end{figure*}

\begin{figure*}[t]
    \centering
   \begin{subfigure}[b]{0.4\textwidth}
         \centering
        \includegraphics[width=0.8\textwidth]{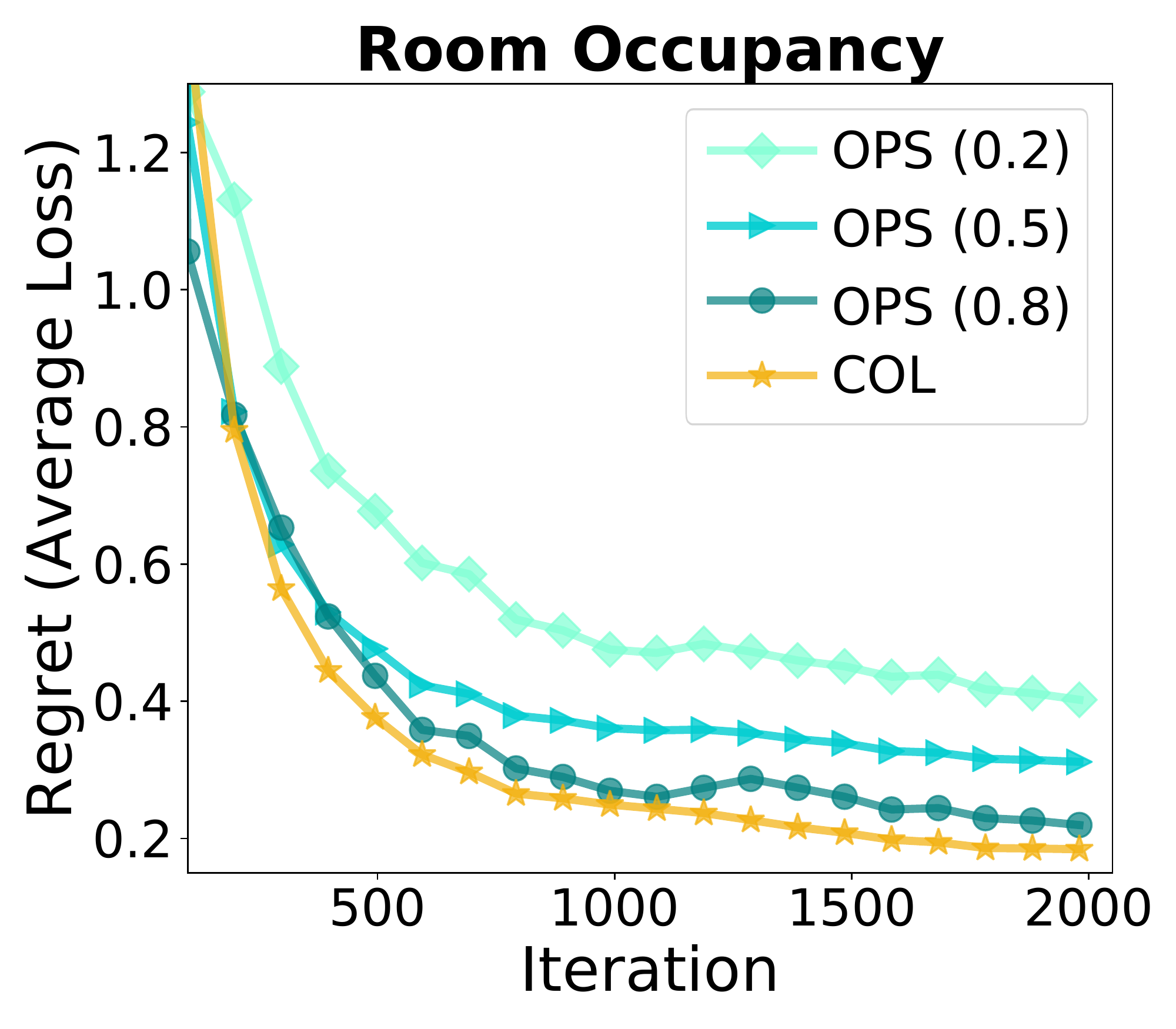}
        \caption{stochastic=100\%}
        \label{fig:3-Room-beta100}
    \end{subfigure}
    \begin{subfigure}[b]{0.4\textwidth}
         \centering
        \includegraphics[width=0.8\textwidth]{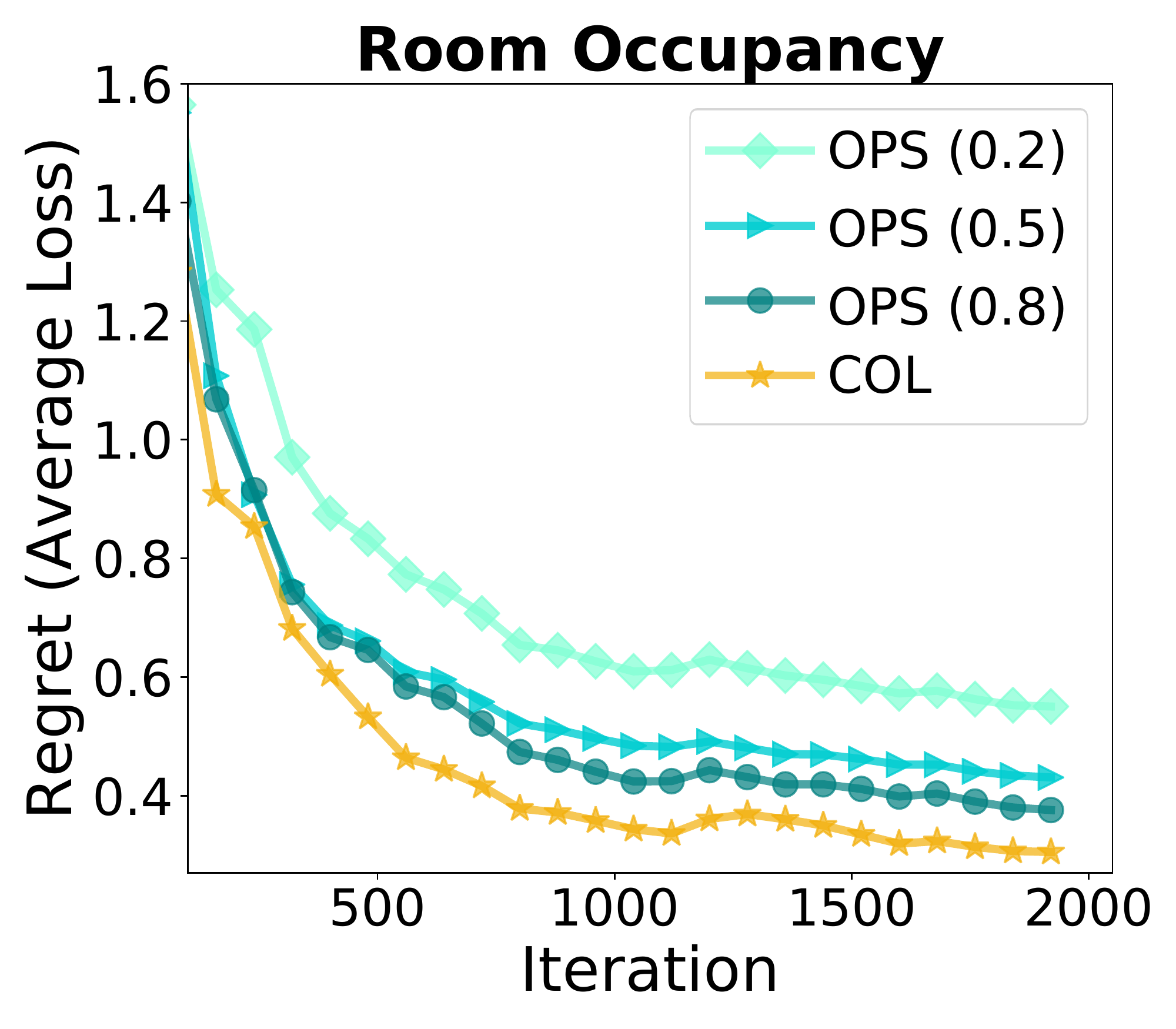}
        \caption{stochastic=50\%}
        \label{fig:3-Room-beta50}
    \end{subfigure}
       \caption{Evaluation on the Network Density}
       \label{fig:density_2}
\end{figure*}

\subsection{Evaluation on \textit{Room Occupancy} Dataset}
Due to the limitation of space, we only present the experiment results on \textit{SUSY} dataset in Section \ref{sec:network_size} and \ref{sec:network_density}. Related presents on \textit{Room Occupancy} is shown in Figure~\ref{fig:sizes_2} and Figure~\ref{fig:density_2}.

In Figure~\ref{fig:sizes_2}, we vary the number of clients in the network, from 6 to 20. In Figure~\ref{fig:density_2}, the network density is varied. All the results are consistent with the ones on \textit{SUSY}.

\subsection{Comparison with local online gradient descent}
To justify the necessity of communication, we also compare OPS with the local online gradient descent (local OGD), where every node trains a local model without communicating with
others. We run experiments in different ratios of the adversary and stochastic components based on settings in Figure 2. As we can see in Figure \ref{fig:baseline}, we empirically prove that communication does have benefits in reducing regret. Moreover, as the ratio of the stochastic components increased, the regret of OPS decreases further. This also empirically proves that the stochastic component can benefit from the communication while the adversarial component does not.

\begin{figure}[tb]
     \centering
     \begin{subfigure}[b]{0.4\textwidth}
          \centering
         \includegraphics[width=0.8\textwidth]{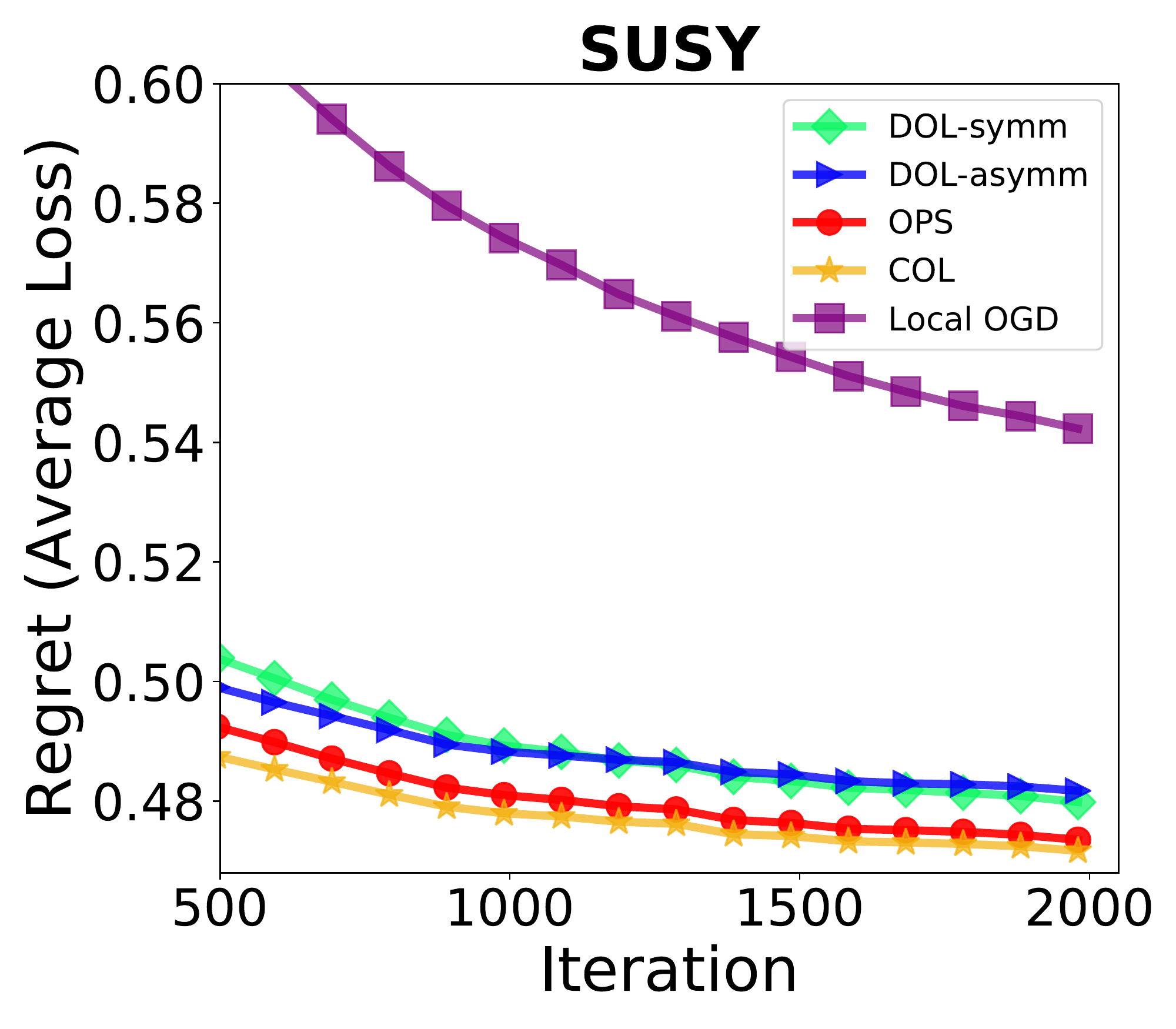}
         \caption{stochastic=100\% \\ ($n=128$, \#Neighbors=32)}
         \label{fig:3-SUSY-beta100}
     \end{subfigure}
     \begin{subfigure}[b]{0.4\textwidth}
          \centering
         \includegraphics[width=0.8\textwidth]{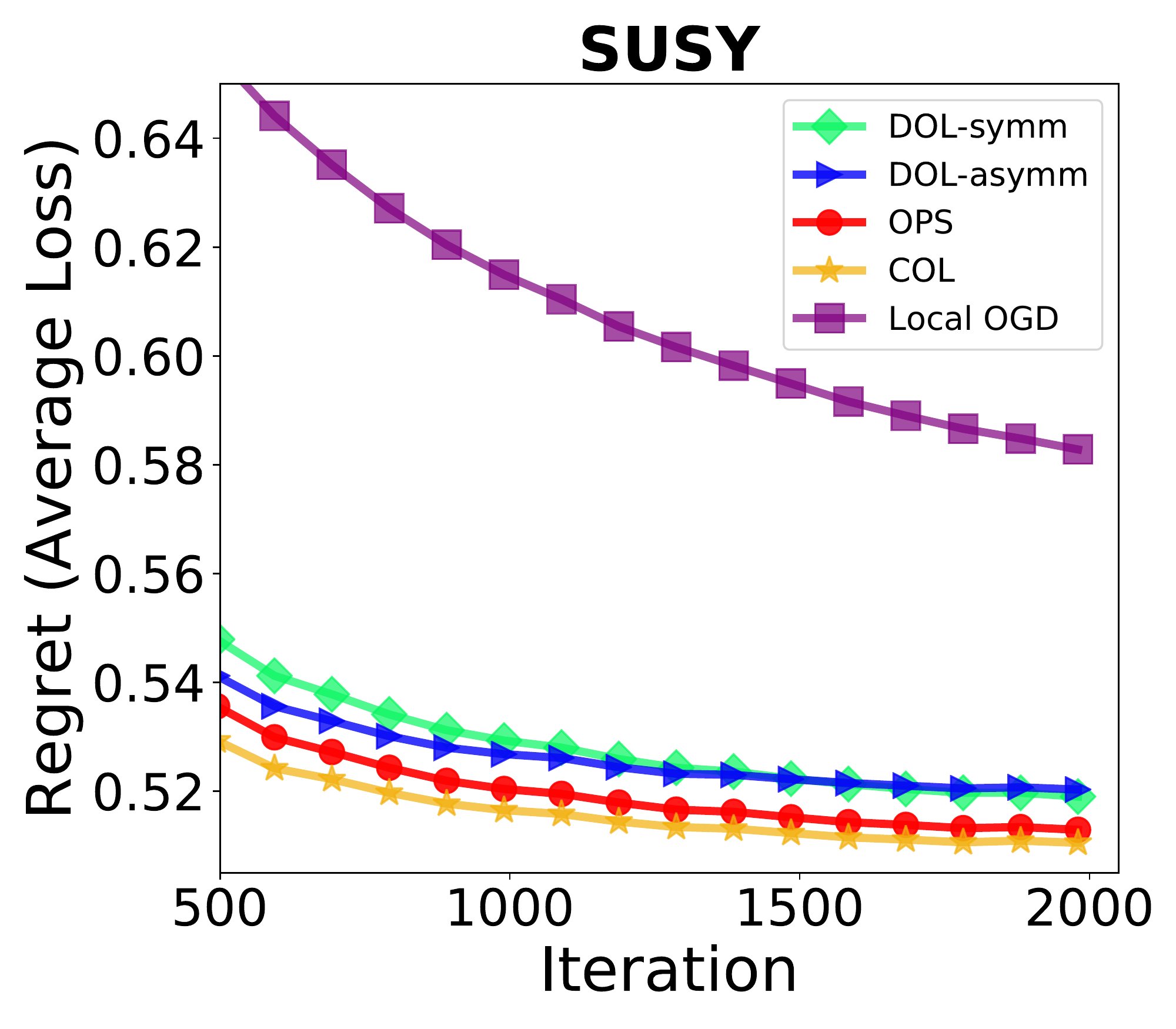}
         \caption{stochastic=50\% \\($n=128$, \#Neighbors=32)}
         \label{fig:3-SUSY-beta50}
     \end{subfigure}
\caption{Comparison between OPS and Local OGD.}
\label{fig:baseline}
\end{figure}

\end{document}